\DeclareMathOperator*{\argmax}{arg\,max}
\DeclareMathOperator*{\argmin}{arg\,min}
\theoremstyle{plain}
\newtheorem{theorem}{Theorem}[section]
\newtheorem{lemma}[theorem]{Lemma}
\theoremstyle{definition}
\newtheorem{definition}[theorem]{Definition}
\newtheorem{assumption}[theorem]{Assumption}
\theoremstyle{remark}
\newtheorem{remark}[theorem]{Remark}
\icmltitlerunning{Policy Regularization with Dataset Constraint for Offline Reinforcement Learning}
\begin{document}

\twocolumn[
\icmltitle{Policy Regularization with Dataset Constraint for Offline Reinforcement Learning}

% It is OKAY to include author information, even for blind
% submissions: the style file will automatically remove it for you
% unless you've provided the [accepted] option to the icml2023
% package.

% List of affiliations: The first argument should be a (short)
% identifier you will use later to specify author affiliations
% Academic affiliations should list Department, University, City, Region, Country
% Industry affiliations should list Company, City, Region, Country

% You can specify symbols, otherwise they are numbered in order.
% Ideally, you should not use this facility. Affiliations will be numbered
% in order of appearance and this is the preferred way.
\icmlsetsymbol{equal}{*}

\begin{icmlauthorlist}
\icmlauthor{Yuhang Ran}{equal,nju}
\icmlauthor{Yi-Chen Li}{equal,nju}
\icmlauthor{Fuxiang Zhang}{nju,polixir}
\icmlauthor{Zongzhang Zhang}{nju}
\icmlauthor{Yang Yu}{nju,polixir}
\end{icmlauthorlist}

\icmlaffiliation{nju}{National Key Laboratory for Novel Software Technology, Nanjing University}
\icmlaffiliation{polixir}{Polixir Technologies}

\icmlcorrespondingauthor{Zongzhang Zhang}{zzzhang@nju.edu.cn}

% You may provide any keywords that you
% find helpful for describing your paper; these are used to populate
% the "keywords" metadata in the PDF but will not be shown in the document
\icmlkeywords{Offline Reinforcement Learning, Policy Regularization, Dataset Constraint}

% \begin{center}
%     $^{1}$National Key Laboratory for Novel Software Technology, Nanjing University\\
%     $^{2}$Polixir Technologies\quad $^{*}$Equal Contribution\\
%     \texttt{\{ranyh, liyc, zhangfx\}@lamda.nju.edu.cn}, \texttt{\{yuy, zzzhang\}@nju.edu.cn}
% \end{center}

\vskip 0.3in
]

% this must go after the closing bracket ] following \twocolumn[ ...

% This command actually creates the footnote in the first column
% listing the affiliations and the copyright notice.
% The command takes one argument, which is text to display at the start of the footnote.
% The \icmlEqualContribution command is standard text for equal contribution.
% Remove it (just {}) if you do not need this facility.

% \printAffiliationsAndNotice{}  % leave blank if no need to mention equal contribution
\printAffiliationsAndNotice{\icmlEqualContribution} % otherwise use the standard text.

\begin{abstract}
    We consider the problem of learning the best possible policy from a fixed dataset, known as offline Reinforcement Learning (RL). A common taxonomy of existing offline RL works is policy regularization, which typically constrains the learned policy by distribution or support of the behavior policy. However, distribution and support constraints are overly conservative since they both force the policy to choose similar actions as the behavior policy when considering particular states. It will limit the learned policy's performance, especially when the behavior policy is sub-optimal. In this paper, we find that regularizing the policy towards the nearest state-action pair can be more effective and thus propose {\bf P}olicy {\bf R}egularization with {\bf D}ataset {\bf C}onstraint (PRDC). When updating the policy in a given state, PRDC searches the offline dataset for the nearest state-action sample and then restricts the policy with the action of this sample. Unlike previous works, PRDC can guide the policy with better behaviors from the dataset, allowing it to choose actions that do not appear in the dataset along with the given state. It is a softer constraint but still keeps enough conservatism from out-of-distribution actions. Empirical evidence and theoretical analysis show that PRDC can alleviate offline RL's fundamentally challenging value overestimation issue with a bounded performance gap. Moreover, on a set of locomotion and navigation tasks, PRDC achieves state-of-the-art performance compared with existing methods. Code is available at \url{https://github.com/LAMDA-RL/PRDC}.
\end{abstract}

\section{Introduction}

Online Reinforcement Learning (RL) has shown remarkable success in a variety of domains such as games \cite{alphagozero}, robotics \cite{rl_robotics}, and recommendation systems \cite{rl_recommendation}. However, learning an optimal policy online demands continual and possibly huge environmental interactions because of \textit{trial-and-error} \cite{rlbook}. For expense or safety concerns, this may be impractical in real-world applications. On the other hand, offline RL learns from a fixed, previously collected dataset, thus eliminating the need for additional interactions during training. Due to the promise of turning datasets into powerful decision-making engines, offline RL has attracted significant interest in recent years \cite{offline_levine}.

One of the fundamental challenges of offline RL is value overestimation in Out-Of-Distribution (OOD) actions (see \cref{sec:over_value}). According to the methodology of dealing with OOD actions, existing works on offline RL could be roughly categorized into the following two taxonomies \cite{offline_theory}: $(\romannumeral1)$ Pessimistic value-based approaches that learn an underestimated or conservative value to discourage choosing OOD actions \cite{cql, combo, edac, mcq, rorl}. $(\romannumeral2)$ Regularized policy-based approaches that constrain the policy to avoid visiting the states and actions that are less covered by the dataset \cite{bcq, awac, fisherbrc, iql, td3_bc, spot}.

Our work focuses on policy regularization. Generally, previous policy regularization approaches have constrained the learned policy by either the distribution \cite{brac} or the support \cite{bear} of the behavior policy. Considering a particular state, however, distribution and support constraints are overly conservative since they both restrict the policy by actions from the behavior policy. It will limit the performance of the policy, especially when the actions from the behavior policy in the dataset are not optimal for the given state. Nevertheless, there are far more actions in the dataset than in a particular state. A natural question thus arises: \textit{Can we guide the policy by all actions in the dataset rather than the limited ones in a given state?}

It motivates us to propose a new approach on policy regularization. When updating the policy in a particular state, our method will search the dataset for the nearest neighbor of the state-action pair, where the action comes from the policy's prediction. Then we will constrain the policy toward the action of the nearest neighbor. This novel constraint can be interpreted as minimizing the \textit{point-to-set} distance between the state-action pair and the dataset. Thus it is neither a distribution constraint nor a support constraint but a \textit{dataset constraint} method.

One benefit of our proposed dataset constraint is that it can relieve excessive pessimism from sub-optimal behaviors of the behavior policy, allowing the policy to choose better actions that do not appear in the dataset along with the given state but still keeping sufficient conservatism from OOD actions. We name our method \textbf{P}olicy \textbf{R}egularization with \textbf{D}ataset \textbf{C}onstraint (PRDC). PRDC can be combined with any actor-critic algorithm, and we instantiate a practical algorithm upon TD3 \cite{td3} with a highly efficient implementation (\cref{sec:method}). Empirical evidence and theoretical analysis show that PRDC is able to effectively alleviate the fundamentally challenging value overestimation issue of offline RL with a bounded performance gap (\cref{sec:method}). On the Gym and AntMaze tasks from D4RL \cite{d4rl}, PRDC achieves state-of-the-art performance compared with previous methods (\cref{sec:exp}).

\section{Preliminaries}

This section will briefly introduce the background, problem setting, and some notations.

\subsection{Reinforcement Learning}
We consider the infinite-horizon Markov Decision Process (MDP) defined by a tuple $( \mathcal{S},\mathcal{A}, p_0, \mathcal{P}, r, \gamma)$, where $\mathcal{S}$ is the state space, $\mathcal{A}$ is the action space, $p_0$ is the initial state distribution, $\mathcal{P}: {\mathcal{S} \times \mathcal{A}} \to \Delta_\mathcal{S}$ is the transition function\footnote{We use $\Delta_X$ to denote the set of distributions over $X$.}, $r: \mathcal{S} \to \mathbb{R}$ is the reward function, and $\gamma \in [0,1)$ is a discount factor. This paper considers  \textit{deterministic} policies and continuous state and action spaces. We assume $\forall (s,a) \in \mathcal{S} \times \mathcal{A}$, $|r(s)| \le R_{\max}$. 

Given the MDP and the agent's policy $\pi: \mathcal{S} \to \mathcal{A}$, the whole decision process runs as follows: At time step $t \in \mathbb{N}$, the agent perceives the environment state $s_t$; then decides to take action $a_t = \pi(s_t)$, resulting in the environment to transit to the next state $s_{t+1}$ and return a reward $r(s_t)$ to the agent. Let $J(\pi)$ be the expected discounted reward of $\pi$, 
\begin{equation}\label{equ:j}
    J(\pi) = \mathbb{E}\left[\sum_{t=0}^\infty \gamma^t r(s_t)\right],
\end{equation}
where the expectation takes over the randomnesses of the initial state distribution $p_0$ and the transition function $\mathcal{P}$. The objective of RL is to learn an optimal policy $\pi^*$ that has maximal expected discounted reward, i.e., 
$$
\pi^* = \argmax_\pi J(\pi).
$$
Let $Q^\pi: \mathcal{S}\times \mathcal{A}\to \mathbb{R}$ be the state-action value function (or Q-function), 
$$
Q^\pi(s,a) = \mathbb{E}\left[\sum_{t=0}^\infty \gamma^t r(s_t)\bigg|s_0=s,a_0=a\right]. 
$$
We further define the occupancy measure $d^\pi: \mathcal{S} \to \mathbb{R}$ of $\pi$,
$$
d^\pi(s') = (1-\gamma)\int_\mathcal{S}\sum_{t=0}^\infty \gamma^t p_0(s)p(s\to s', t,\pi)\dd s,
$$
where $p(s\to s', t, \pi)$ denotes the density at state $s'$ after taking $t$ steps from state $s$ under policy $\pi$. Then \cref{equ:j} could be reformulated \cite{ddpg_convergence} as
\begin{equation}\label{equ:dperf}
    J(\pi) = \frac{1}{1-\gamma}\mathbb{E}_{s\sim d^\pi(s)}[r(s)].
\end{equation}

\subsection{Offline Reinforcement Learning}

As stated in the above subsection, the classical setting of RL requires interactions with the environment during training. However, interaction is sometimes not allowed, especially when the task demands highly in safety or cost. To this end, offline RL, a.k.a., batch RL or data-driven RL, considers learning in an offline manner. Formally speaking, let $\mathcal{D} = \{(s,a,s',r,d)\}$ denotes the set of transitions collected by a behavior policy $\mu$, where $s$, $a$, $s'$, and $r$ are state, action, next state, and reward, respectively; $d$ is a done flag indicating whether $s'$ is a terminal state\footnote{In infinite-horizon MDPs, we also use $d$ to denote whether $s'$ is an absorbing state since there may be no terminal states.}. The goal of offline RL is to learn the best possible policy from $\mathcal{D}$ without further interactions \cite{batchrl}.

\subsection{Value Overestimation Issue of Offline RL}\label{sec:over_value}

We often use the following \textit{one-step} Temporal Difference (TD) update \cite{rlbook} to approximate $Q^\pi$,
\begin{equation}\label{equ:td}
    \hat{Q}^\pi(s,a) \gets \hat{Q}^\pi(s,a) + \eta \delta_t,
\end{equation}
where $\delta_t = \left[r(s) + \gamma (1-d) \hat{Q}^\pi(s',a') - \hat{Q}^\pi(s,a)\right]$, $a' = \pi(s')$ and $\eta$ is a hyper-parameter controlling the step size. With sufficiently enough samples, $\hat{Q}^\pi$ will converge to $Q^\pi$ \cite{td_converge}. But in offline RL, the dataset $\mathcal{D}$ is limited, with partial coverage of the state-action space. Thus, $(s',a')$ may not exist in $\mathcal{D}$ (a.k.a., distribution shift) because $a'$ is predicted by the learned policy $\pi$, not the behavior policy $\mu$. If $\hat{Q}^\pi(s',a')$ is overestimated, the error will continuously backpropagate to the updates of $\hat{Q}^\pi$, eventually causing $\hat{Q}^\pi$ to have overly large outputs for any input state-action. It is known as the value overestimation issue, with which policy regularization \cite{brac} has been proven to be effective to deal. And we can categorize existing works on policy regularization into distribution constraint and support constraint (\cref{sec:related}).

\section{Our Method}\label{sec:method}

We now introduce our method, \textbf{P}olicy \textbf{R}egularization with \textbf{D}ataset \textbf{C}onstraint (PRDC). First, we will begin by defining our proposed dataset constraint objective. Then, we will instantiate a practical algorithm. Finally, we will give a theoretical analysis of why PRDC works in offline RL and a bound of the performance gap.

\subsection{Dataset Constraint}\label{sec:dc}

\begin{figure}[t]
    \begin{center}
        \includegraphics[width=\columnwidth]{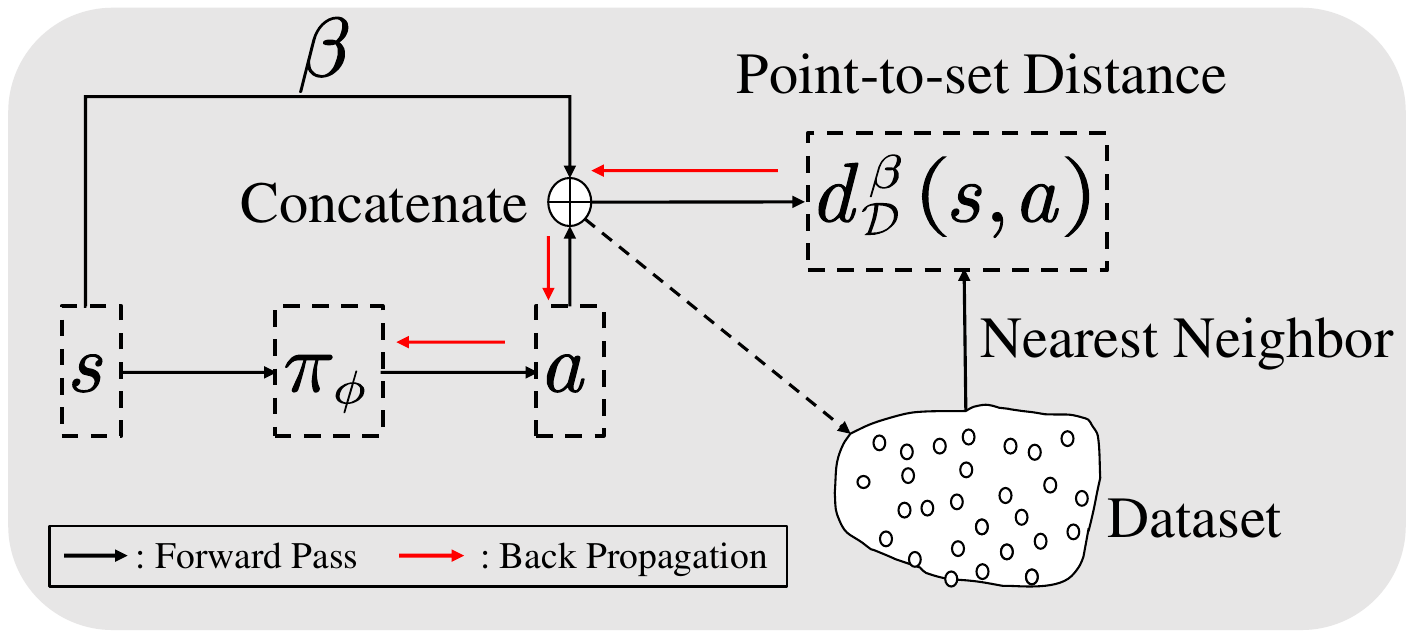}
    \end{center}
    \caption{Illustration of the forward calculation and back propagation of our proposed policy regularization with dataset constraint.}
    \label{fig:method}
\end{figure}

The basic motivation of our \textit{Dataset Constraint (DC)} is to allow the policy $\pi$ to choose optimal actions from all actions in the offline dataset $\mathcal{D}$. Since either distribution constraint or support constraint regularizes $\pi$ by only selecting actions from the same state in the dataset, DC empowers a better generalization ability on $\pi$. However, as claimed in \cref{sec:over_value}, we still have to impose enough conservatism on $\pi$ to avoid the value overestimation issue. This trade-off needs to be carefully balanced.

\begin{definition}[Point-to-set distance]\label{def:psd}
Given the offline dataset $\mathcal{D}$, for any state-action pair $(s, a) \in \mathcal{S}\times \mathcal{A}$, we define its point-to-set distance to $\mathcal{D}$ as
$$
    d^\beta_\mathcal{D}(s,a) = \min_{(\hat{s},\hat{a})\in\mathcal{D}}\|(\beta s)\oplus a-(\beta \hat{s})\oplus \hat{a}\|,
$$
where $\oplus$ denotes the vector concatenation operation and $\beta$ is a hyper-parameter trading off the differences in $s$ and $a$.
\end{definition}

Based on \cref{def:psd}, we give the following objective:
\begin{equation}\label{equ:reg}
    \min_\phi \mathcal{L}_\text{DC}(\phi) := \mathbb{E}_{s\sim \mathcal{D}}\bqty{d^\beta_\mathcal{D}\left(s, \pi_\phi(s)\right)},
\end{equation}
where $\phi$ denotes the learnable parameters of the policy $\pi_\phi$. \cref{fig:method} illustrates the forward calculation and back propagation when using stochastic gradient descent \cite{gd} to optimize \cref{equ:reg}, where we can see that DC will regularize $\pi_\phi$ by minimizing the distance between $(s, \pi_\phi(s))$ and its nearest neighbor in $\mathcal{D}$. Therefore, DC may retrieve a state-action pair $(\hat{s}, \hat{a})$, where $\hat{s}$ is not equal to the given state $s$ and $\hat{a}$ never be along with $s$ in $\mathcal{D}$. This is essentially different from the distribution constraint and the support constraint, where the former forces $\pi(\cdot|s)$ to be similar to $\mu(\cdot|s)$ and the latter requires $\pi(s)$ to be supported by $\mu(\cdot|s)$.

We note that $\beta$ is a key hyper-parameter, controlling the strength of conservatism. Intuitively, when $\beta \to \infty$ or the state space is high-dimensional (e.g., image), \cref{equ:reg} will be dominated by the difference in $s$, and it reduces to behavioral cloning \cite{bc}, which has been proved to be effective on some tasks in TD3+BC \cite{td3_bc}; when $\beta \to 0$, \cref{equ:reg} will ignore the difference in $s$ and it reduces to regularizing $\pi$ such that $\pi(s)$ is close to at least one action in $\mathcal{D}$. However, in-distribution actions are coupled with states. Thus this reduced regularization may not sufficiently constrain the policy from OOD actions. With a proper $\beta$, our state-aware regularization will allow $\pi$ to learn optimal actions from a different state but still maintain enough conservatism.

\subsection{A Practical Algorithm}\label{sec:algo}

\cref{equ:reg} can be combined with any modern actor-critic algorithm, such as TD3 \cite{td3} or SAC \cite{sac}. In this paper, we choose TD3 due to its simplicity and high performance. 

Let $\theta_1, \theta_2, \phi$ be the parameters of TD3's two Q-networks and the policy network, respectively; and $\theta'_1, \theta'_2, \phi'$ denote the corresponding target networks' parameters. TD3 uses the following TD error to update $Q_{\theta_1}$ and $Q_{\theta_2}$:
\begin{equation}\label{equ:q_update}
    \mathcal{L}_{\text{TD}}(\theta_i) = \mathbb{E}_{(s,a,r,s',d) \sim \mathcal{D}}\bqty{\pqty{Q_{\theta_i}(s,a)-y(r,s',d)}^2},
\end{equation}
where $y(r,s',d) = r + \gamma (1-d) \min_i Q_{\theta'_i}(s', a')$, $a' = \texttt{clip}_A(\pi_{\phi'}(s') + \epsilon)$, $\epsilon \sim \texttt{clip}_c(\mathcal{N}(0, \tilde{\sigma}^2))$\footnote{The function $\texttt{clip}_X(\cdot)$ clips its input into $[-X, X]$, $X>0$.}, $i\in \{1,2\}$. $\tilde{\sigma}$ and $c$ are two hyper-parameters for exploration. To update the policy $\pi_\phi$, TD3 uses the loss below:
\begin{equation}\label{equ:pi_udate}
    \mathcal{L}_{\text{TD3}}(\phi) = \mathbb{E}_{s \sim \mathcal{D}, \tilde{a} = \pi_\phi(s)}[-Q_{\theta_1}(s,\tilde{a})].
\end{equation}

Combining \cref{equ:reg} and \cref{equ:pi_udate}, we get the following policy update loss for offline RL:
\begin{equation}\label{equ:offline_pi_update}
    \mathcal{L}_{\text{PRDC}}(\phi) = \lambda \mathcal{L}_{\text{TD3}}(\phi) + \mathcal{L}_{\text{DC}}(\phi).
\end{equation}
Following TD3+BC \cite{td3_bc}, we set $\lambda = \frac{\alpha N}{\sum_{s_i, a_i}\abs{Q(s_i, a_i)}}$ with $\alpha$ a hyper-parameter and $N$ the batch size. The pseudo-code is summarized in Algorithm \ref{alg:algorithm}.

\begin{algorithm}[ht]
    \caption{PRDC}
    \label{alg:algorithm}
    \begin{algorithmic}[1] %[1] enables line numbers
        \STATE {\bfseries Input:} Initial policy parameters $\phi$, Q-function parameters $\theta_1$, $\theta_2$, offline dataset $\mathcal{D}$, hyper-parameters $\alpha, \beta, \tau$. 
        \STATE Set $\theta'_1 \gets \theta_1$, $\theta'_2 \gets \theta_2$, $\phi' \gets \phi$.
        \FOR{step $t = 1$ to $T$}
        \STATE Sample a mini-batch of transitions $\{(s, a, r, s', d)\}$ from $\mathcal{D}$.
        \STATE Update $\theta_i, i\in \{1,2\}$ using gradient descent with \cref{equ:q_update}.
        \STATE Use KD-Tree to find the nearest neighbor in $\mathcal{D}$ of every $(s, \pi_\phi(s))$.
        \STATE Update $\phi$ using gradient descent with \cref{equ:offline_pi_update}. 
        \STATE Update target network with $\theta'_1 \gets \tau \theta_1 + (1-\tau) \theta'_1$, $\theta'_2 \gets \tau \theta_2 + (1-\tau) \theta'_2$, $\phi' \gets \tau \phi + (1-\tau) \phi'$.
        \ENDFOR
    \end{algorithmic}
\end{algorithm}

\begin{remark}[Highly efficient implementation]
Our method requires searching $(s,\pi_\phi(s))$'s nearest neighbor in $\mathcal{D}$, which will be time-consuming if $\mathcal{D}$ is large. To speed up the search, we use KD-Tree \cite{kdtree}. KD-Tree has a time complexity of $\mathcal{O}(M\log |\mathcal{D}|)$ in average for every nearest neighbor retrieval, where $M$ is the feature dimension size. It greatly improves the run time (\cref{sec:run_time}).
\end{remark}

\subsection{Theoretical Analysis}\label{sec:theory}

We will begin with a quantitative analysis of why our method, PRDC, can alleviate the value overestimation issue. Then we will give a performance gap between the learned policy $\pi_\phi$ and the behavior policy $\mu$.

\begin{definition}[Lipschitz function]\label{def:continue}
    A function $f$ from $S \subset \mathbb{R}^m$ into $\mathbb{R}^n$ is called a Lispschitz function if there is a real constant $K\ge 0$ such that
$$
\|f(x)-f(y)\| \le K \|x-y\|,
$$
for all $x, y \in S$. $K$ is called the Lipschitz constant. Unless explicitly stated, we onward use $\|\cdot\|$ to denote the L2 norm.
\end{definition}

We make the following assumptions about the Q-function, the behavior policy $\mu$, and the transition function $\mathcal{P}$.
\begin{assumption}\label{assum:q_continue}
Suppose that the Q-function we learn is a Lipschitz function with $K_Q$ the Lispchitz constant, i.e.,
$$
\|Q(s_1,a_1)-Q(s_2,a_2)\| \le K_Q\|s_1\oplus a_1 -s_2 \oplus a_2\|,
$$
for all $(s_1,a_1), (s_2,a_2)\in\mathcal{S}\times \mathcal{A}$.
\end{assumption}
\begin{assumption}\label{assum:mu_continue}
Suppose that $\mu$ is a Lipschitz function with $K_\mu$ the Lispchitz constant, i.e.,
$$
\|\mu(\cdot|s_1)-\mu(\cdot|s_2)\| \le K_\mu\|s_1 -s_2\|,
$$
for all $s_1, s_2 \in \mathcal{S}$.
\end{assumption}
\begin{assumption}\label{assum:p_continue}
$\forall a_1, a_2 \in \mathcal{A}$, there exists a positive constant $K_\mathcal{P}$ such that
$$
\|\mathcal{P}(s'|s,a_1)-\mathcal{P}(s'|s,a_2)\|\le K_\mathcal{P}\|a_1-a_2\|,
$$
for any $s, s' \in \mathcal{S}$.
\end{assumption}
Since we often use neural networks or linear models to parameterize the value function and policy, \cref{assum:q_continue} and \cref{assum:mu_continue} can be easily satisfied \cite{nn_lips}. \cref{assum:p_continue} is standard in the theoretical studies of RL \cite{classical_theory}.

\begin{restatable}[]{theorem}{thmq}\label{thm:qsmooth}
Let $\max_{s\in\mathcal{S}} d^\beta_\mathcal{D}(s,\pi_\phi(s))\le \epsilon$, which can be achieved by PRDC. Then with \cref{assum:q_continue} and \cref{assum:mu_continue}, we have
\begin{equation}\label{equ:q_smooth}
    \|Q(s,\pi_\phi(s))-Q(s,\mu(s))\|\le \left((K_\mu+2)/\beta + 1\right)K_Q\epsilon,
\end{equation}
for any $s \in \mathcal{S}$.
\end{restatable}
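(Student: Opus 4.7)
The plan is to exploit the definition of the point-to-set distance at a single, fixed state $s\in\mathcal{S}$. I pick $(\hat s,\hat a)\in\mathcal{D}$ achieving the minimum in $d^\beta_\mathcal{D}(s,\pi_\phi(s))$, so that $\|(\beta s)\oplus\pi_\phi(s)-(\beta\hat s)\oplus\hat a\|\le\epsilon$. Because the squared L2 norm of a concatenation splits additively, this single inequality immediately yields the two component estimates $\|s-\hat s\|\le\epsilon/\beta$ and $\|\pi_\phi(s)-\hat a\|\le\epsilon$, which will drive the rest of the argument. The other indispensable observation is that by \cref{assum:mu_continue} the behavior policy is deterministic, so any transition $(\hat s,\hat a)\in\mathcal{D}$ satisfies $\hat a=\mu(\hat s)$; this is what lets me use the dataset sample as a stand-in for $\mu$.

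From here the argument is a three-point triangle inequality around the auxiliary pair $(\hat s,\mu(\hat s))=(\hat s,\hat a)$:
\begin{align*}
\|Q(s,\pi_\phi(s))-Q(s,\mu(s))\|
&\le\|Q(s,\pi_\phi(s))-Q(\hat s,\hat a)\|\\
&\quad{}+\|Q(\hat s,\mu(\hat s))-Q(s,\mu(s))\|.
\end{align*}
I would bound each term using \cref{assum:q_continue}. The first term is at most $K_Q\sqrt{\|s-\hat s\|^2+\|\pi_\phi(s)-\hat a\|^2}$, which I loosen via $\sqrt{x^2+y^2}\le x+y$ for $x,y\ge 0$ to $K_Q(\epsilon/\beta+\epsilon)$. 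The second term is at most $K_Q(\|\hat s-s\|+\|\mu(\hat s)-\mu(s)\|)$, and invoking Lipschitzness of $\mu$ (\cref{assum:mu_continue}) converts this to $K_Q(1+K_\mu)\|s-\hat s\|\le K_Q(1+K_\mu)\epsilon/\beta$. Summing and pulling out the common $K_Q\epsilon$ gives $K_Q\epsilon\bigl(1+1/\beta+(1+K_\mu)/\beta\bigr)=\bigl((K_\mu+2)/\beta+1\bigr)K_Q\epsilon$, which is exactly the claim.

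The main bookkeeping point, and arguably the only delicate one, is that the weight $\beta$ attaches only to the state block of the concatenation in \cref{def:psd}; this asymmetry is what produces the two different bounds $\epsilon/\beta$ and $\epsilon$ on the state and action components, and it must be applied consistently when unpacking the concatenated-norm estimate. Once that is carefully tracked, the rest reduces to two applications of Lipschitz continuity and the triangle inequality, so I do not anticipate any genuine difficulty; note also that \cref{assum:p_continue} is not needed for this statement and will presumably be used in a separate performance-gap argument later.
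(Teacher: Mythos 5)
Your proposal is correct and follows essentially the same route as the paper's proof: the same auxiliary point $(\hat s,\hat a)=(\hat s,\mu(\hat s))$ realizing the point-to-set minimum, the same triangle-inequality decomposition, and the same unpacking of the weighted concatenated norm into the component bounds $\|s-\hat s\|\le\epsilon/\beta$ and $\|\pi_\phi(s)-\hat a\|\le\epsilon$ (which the paper packages as Lemmas~\ref{lem:pos_ineq} and~\ref{lem:error}). The bookkeeping of where $\beta$ attaches, which you flag as the delicate point, matches the paper exactly and yields the identical constant.
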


Here we use $\mu(s)$ to denote any action supported by $\mu(\cdot|s)$. The proof is in \cref{sec:proofs}.

Suppose that $\mu$ satisfies \cref{assum:mu_continue}, and both $Q^\pi$ and $\hat{Q}^\pi$ satisfies \cref{assum:q_continue}. Then with \cref{thm:qsmooth}, we have that
\begin{equation}\label{equ:approx2}
    \hat{Q}^\pi(s', \pi(s')) \approx \hat{Q}^\pi(s', \mu(s')),
\end{equation}
\begin{equation}\label{equ:approx3}
    Q^\pi(s', \pi(s')) \approx Q^\pi(s', \mu(s')).
\end{equation}
To alleviate the value overestimation issue, the one-step TD update in \cref{equ:td} requires $\hat{Q}^\pi(s', \pi(s'))$ to be an approximately correct estimate of $Q^\pi(s', \pi(s'))$. \cite{doge} has shown that $\hat{Q}^\pi$ will have low approximation errors on in-distribution samples. Although $(s', \mu(s'))$ may not exist in $\mathcal{D}$, we could still treat it as an in-distribution sample since $s'\in\mathcal{D}$ and $\mathcal{D}$ is constructed by $\mu$. That is, 
\begin{equation}\label{equ:approx1}
    \hat{Q}^\pi(s', \mu(s')) \approx Q^\pi(s', \mu(s')).
\end{equation}
Combining \cref{equ:approx2}, \cref{equ:approx3} and \cref{equ:approx1}, we get $\hat{Q}^\pi(s', \pi(s')) \approx Q^\pi(s', \pi(s'))$. We thus conclude that PRDC can alleviate the value overestimation issue. % 

\begin{restatable}[Performance gap of PRDC]{theorem}{perf}\label{thm:perf}
With \cref{assum:mu_continue} and \cref{assum:p_continue}, let $\max_{s\in \mathcal{S}} \abs{\pi^*(s)-\mu(s)}\leq \epsilon_{\rm opt}$ and $\max_{s\in\mathcal{S}} d^\beta_\mathcal{D}(s,\pi_\phi(s))\le \epsilon_\pi$, which can be achieved by PRDC. Then we have
\begin{equation}\label{equ:perf_gap}
    \abs{J(\pi^*)-J(\pi)} \le \frac{CK_{\mathcal{P}}R_{\max}}{1-\gamma}\bqty{(1+\frac{K_\mu}{\beta})\epsilon_\pi + \epsilon_{\rm opt}},
\end{equation}
where $C$ is a positive constant.
\end{restatable}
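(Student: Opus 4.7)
The plan is to reduce the global performance gap to a pointwise policy gap and then invoke a simulation-lemma-style argument driven by the Lipschitz transition assumption. First I would triangulate through $\mu$:
\[
|J(\pi^*) - J(\pi_\phi)| \le |J(\pi^*) - J(\mu)| + |J(\mu) - J(\pi_\phi)|,
\]
so it suffices to bound each term by something proportional to a pointwise action gap.

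The first key step is to turn the dataset-constraint guarantee $d^\beta_\mathcal{D}(s,\pi_\phi(s))\le \epsilon_\pi$ into a bound on $\|\pi_\phi(s)-\mu(s)\|$ that holds for \emph{every} $s\in\mathcal{S}$. For any $s$, let $(\hat s,\hat a)\in\mathcal{D}$ realize the minimum in \cref{def:psd}. Since the dataset is generated by $\mu$, we can take $\hat a = \mu(\hat s)$, and by definition $\beta\|s-\hat s\|\le\epsilon_\pi$ and $\|\pi_\phi(s)-\hat a\|\le\epsilon_\pi$. Then \cref{assum:mu_continue} gives
\[
\|\pi_\phi(s)-\mu(s)\| \le \|\pi_\phi(s)-\mu(\hat s)\| + \|\mu(\hat s)-\mu(s)\| \le \epsilon_\pi + K_\mu\,\epsilon_\pi/\beta = \left(1+\tfrac{K_\mu}{\beta}\right)\epsilon_\pi.
\]
Combining with the hypothesis $\|\pi^*(s)-\mu(s)\|\le\epsilon_{\rm opt}$ yields $\|\pi^*(s)-\pi_\phi(s)\|\le (1+K_\mu/\beta)\epsilon_\pi + \epsilon_{\rm opt}$ uniformly in $s$, which is exactly the bracket appearing in the target inequality.

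The second key step is a standard simulation-lemma argument. Using \cref{equ:dperf} and $|r(s)|\le R_{\max}$, I would bound $|J(\pi_1)-J(\pi_2)| \le \frac{R_{\max}}{1-\gamma}\|d^{\pi_1}-d^{\pi_2}\|_1$. For two deterministic policies that differ pointwise by at most $\Delta$, \cref{assum:p_continue} gives $\|\mathcal{P}(\cdot\mid s,\pi_1(s))-\mathcal{P}(\cdot\mid s,\pi_2(s))\|_1 \le K_\mathcal{P}\Delta$, and a one-step coupling argument then unrolls through the discounted-horizon sum to produce an occupancy-measure bound of the form $\|d^{\pi_1}-d^{\pi_2}\|_1 \le \tilde C K_\mathcal{P}\Delta$ for some constant depending only on $\gamma$. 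Applying this to the pairs $(\pi^*,\mu)$ and $(\mu,\pi_\phi)$, adding the two contributions, and absorbing the $\gamma$-dependent factor into the constant $C$ yields the claimed bound \cref{equ:perf_gap}.

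The main obstacle I expect is the occupancy-measure step: careful bookkeeping is needed to confirm that the Lipschitz-in-action assumption alone (without any Lipschitz-in-state assumption on $\mathcal{P}$) is enough to keep the telescoping coupling from exploding, and to verify that the resulting constant collapses into a single $C/(1-\gamma)$ factor rather than $C/(1-\gamma)^2$. The rest of the argument is a clean triangle inequality together with the Lipschitz bookkeeping already carried out in the proof of \cref{thm:qsmooth}.
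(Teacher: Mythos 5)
Your proposal is correct and follows essentially the same route as the paper: the same triangle inequality through $\mu$, the same nearest-neighbor argument converting $d^\beta_\mathcal{D}(s,\pi_\phi(s))\le\epsilon_\pi$ into the pointwise bound $\|\pi_\phi(s)-\mu(s)\|\le(1+K_\mu/\beta)\epsilon_\pi$ (this is the paper's \cref{lem:pi_diff}), and the same reduction of $|J(\pi_1)-J(\pi_2)|$ to an occupancy-measure difference bounded by $CK_\mathcal{P}\max_s\|\pi_1(s)-\pi_2(s)\|$. The one step you flag as the main obstacle --- the coupling/unrolling argument for the occupancy measures --- is not re-derived in the paper; it is imported wholesale as Lemma 1 of the cited DDPG-convergence reference (\cref{lem:d_bound}), with any $\gamma$-dependence absorbed into the constant $C$, so your concern about a possible extra $1/(1-\gamma)$ factor is simply hidden inside $C$ rather than resolved explicitly.
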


We defer the proof to \cref{sec:proofs}. \cref{thm:perf} provides \textit{sufficient} conditions for optimality, where we see that the performance gap is inversely related to $\beta$. However, as discussed in \ref{sec:dc}, a small $\beta$ may not keep the policy from OOD actions and thus degrades the performance. With a proper $\beta$, our method can reach higher performance than TD3+BC (corresponding to $\beta \to \infty$). Moreover, for any state $s$, if we assume that there exists another state-action pair $(\tilde{s}, \tilde{a}) \in \mathcal{D}$ such that $\tilde{a}$ is the optimal action for state $s$, then with a properly selected $\beta$, we can get that
\(
    (\tilde{s}, \tilde{a}) = \argmin_{(\hat{s},\hat{a})\in \mathcal{D}} d^\beta_\mathcal{D}(s, \pi(s)),
\)
where we use $(\hat{s}, \hat{a})\in \mathcal{D}$ to denote all the state-action pairs in $\mathcal{D}$. Regularizing $\pi(s)$ toward $\tilde{a}$ will eventually give us an optimal policy. That is, we can obtain a \textbf{zero} performance gap between $\pi^*$ and $\pi$ under the assumption above.

\section{Related Work}\label{sec:related}

\begin{table*}[ht]
    \caption{Average normalized score over the final 10 evaluations and 5 seeds. Scores with the highest mean are highlighted.}
    \label{tab:benchmark}
    \begin{center}
    \resizebox{\textwidth}{!}{\begin{tabular}{lrrrrrrrrr||r@{\,}l} 
    \toprule
                             Task Name & BC & BCQ & BEAR & AWAC & CQL & IQL & TD3+BC & DOGE & SPOT & \multicolumn{2}{c}{\colorbox{lightgray}{PRDC (Ours)}}\\
    \midrule
    halfcheetah-random      & 0.2 & 8.8 & 15.1  & --- &     20.0    &   11.2 &  11.0   &  17.8 &  ---  & \textbf{26.9} & $\pm$\,\,\, 1.0     \\
    hopper-random           & 4.9 & 7.1 & 14.2 & ---  &     8.3     &    7.9  & 8.5    &  21.1 &  ---  &  \textbf{26.8} & $\pm$\,\,\,  9.3   \\
    walker2d-random         & 1.7 & 6.5 & \textbf{10.7} & --- &     8.3 &  5.9   &   1.6 &  0.9 &  ---  &   5.0 &  $\pm$\,\,\, 1.2   \\
    \hline
    halfcheetah-medium     & 42.6 & 47.0 & 41.0 & 43.5  &  44.0  &   47.4  &    48.3   & 45.3  & 58.4  & \textbf{63.5} & $\pm$\,\,\, 0.9  \\
    hopper-medium          & 52.9 & 56.7 & 51.9 & 57.0 &     58.5  &      66.2 &     59.3 & 98.6 & 86.0  &    \textbf{100.3} & $\pm$\,\,\, 0.2  \\
    walker2d-medium        & 75.3 & 72.6 & 80.9 & 72.4     &    72.5    &      78.3     &  83.7 & \textbf{86.8} &  86.4  &   85.2 & $\pm$\,\,\, 0.4  \\
    \hline
    halfcheetah-medium-replay &  36.6 & 40.4 & 29.7 & 40.5 &   45.5     &     44.2       &    44.6 & 42.8  & 52.2  &   \textbf{55.0} & $\pm$\,\,\, 1.1  \\
    hopper-medium-replay      &  18.1 & 53.3 & 37.3 &  37.2 &  95.0    &     94.7     &    60.9  & 76.2 & \textbf{100.2}  &   100.1 & $\pm$\,\,\, 1.6 \\
    walker2d-medium-replay    &  26.0 & 52.1 & 18.5 &  27.0 &  77.2    &     73.8      &    81.8 & 87.3 & 91.6  &  \textbf{92.0} & $\pm$\,\,\, 1.6    \\
    \hline
    halfcheetah-medium-expert &  55.2 & 89.1 & 38.9 &  42.8 &  91.6    &     86.7     &    90.7 & 78.7 & 86.9 &    \textbf{94.5} & $\pm$\,\,\, 0.5   \\
    hopper-medium-expert      &  52.5 & 81.8 & 17.7 &  55.8 &  105.4    &    91.5    &   98.0 & 102.7  & 99.3  &   \textbf{109.2} & $\pm$\,\,\, 4.0    \\
    walker2d-medium-expert    & 107.5 & 109.5 & 95.4  &  74.5 & 108.8  &   109.6       &  110.1 & 110.4 & \textbf{112.0} &   111.2 & $\pm$\,\,\, 0.6   \\
    \hline
    antmaze-umaze   & 65.0  & 78.9 & 73.0 & 56.7 & 84.8  &   88.2      &  91.3 &   97.0 & 93.5 & \textbf{98.8} &  $\pm$\,\,\, 1.0   \\
    antmaze-umaze-diverse   & 55.6  & 61.0 & 61.0 &  49.3 &  43.3    &   66.7       &  54.6 & 63.5 &   40.7   & \textbf{90.0} & $\pm$\,\,\, 6.8   \\
    antmaze-medium-play   & 0.0 & 0.0 & 0.0  & 0.0 &    65.2    &   70.4       &  0.0  & 80.6 & 74.7  & \textbf{82.8} & $\pm$\,\,\, 4.8   \\
    antmaze-medium-diverse   & 0.0 & 0.0 & 8.0  & 0.7 &   54.0   &   74.6      &  0.0  & 77.6 & \textbf{79.1}  & 78.8 & $\pm$\,\,\, 6.9  \\
    antmaze-large-play   & 0.0 & 6.7 & 0.0  & 0.0 &   18.8   &   43.5       &  0.0   & 48.2 & 35.3 & \textbf{54.8} & $\pm$ 10.9   \\
    antmaze-large-diverse   & 0.0 & 2.2 & 0.0  & 1.0 &   31.6   &   45.6      &  0.0  & 36.4 & 36.3  & \textbf{50.0} & $\pm$\,\,\, 5.4   \\
    \bottomrule
    \end{tabular}}
    \end{center}
    \vskip -0.1in
\end{table*}

Policy regularization is a typical way in offline RL to avoid OOD actions. The basic idea is to augment an actor-critic algorithm with a penalty measuring the divergence of the policy from the offline dataset \cite{fisherbrc}. One of the first policy regularization methods in offline RL is BCQ \cite{bcq}. BCQ firstly uses CVAE \cite{cvae} to fit the behavior policy $\mu$ and then learns the policy $\pi$, which has a similar distribution to that of $\mu$. Nevertheless, the fitting error will backpropagate to $\pi$ and eventually affect $\pi$'s performance. TD3+BC \cite{td3_bc} adds a behavioral cloning \cite{bc} term to the policy improvement loss of TD3 \cite{td3}, successfully constraining $\pi$ without an explicit fit of $\mu$. Although simple, TD3+BC achieves competitive results on the Gym-MuJoCo suite of the D4RL benchmark \cite{d4rl}, compared with state-of-the-art methods. TD3+BC shows that even a simple regularization term can achieve superior performance, which greatly inspires the designation of our method.

Both BCQ and TD3+BC constrain $\pi$ to match the distribution of $\mu$. Although effectively keeping the policy from OOD actions, distribution constraint limits the performance of $\pi$ since it cannot distinguish the optimal actions from the poor ones. DOGE \cite{doge} proposes to regularize the policy within the convex hull of dataset via a state-conditioned distance function, which regresses the expectation of actions along with the given state in the offline dataset. So $\pi$ is still limited by the distribution of the behavior policy. To this end, BEAR \cite{bear} uses the maximum mean discrepancy divergence (MMD) \cite{mmd} with a Gaussian kernel as the $f$-divergence to constrain $\pi$. Empirically, they found that when computing MMD over a small number of samples, the sampled MMD between $\mu$ and $\pi$ was similar to the MMD between the supports of $\mu$ and $\pi$. In their experiments, they showed that this support constraint could find optimal policies even when the offline dataset $\mathcal{D}$ was composed of several sub-optimal behaviors. Our method motivates from a quite different perspective. It is neither a distribution constraint method nor a support constraint method. Instead, we constrain $\pi$ with the whole dataset $\mathcal{D}$. Moreover, we want to learn optimal actions from all those in $\mathcal{D}$ instead of just those in a particular state. For a thorough overview of existing offline RL methods and their difference from ours, we recommend referring to \cite{offline_overview}.

Apart from policy regularization, there are also some works associated with dataset constraint. \cite{retrieval_rl} augmented an RL agent with a retrieval process (parameterized as a neural network) that has direct access to a dataset of experiences. They tested the retrieval process in multi-task offline RL settings, showing that it can learn good task representations. However, it is about something other than regularizing the policy from OOD actions. \cite{retrieval_nn} proposed to search the dataset for the current state's nearest neighbor, which would be fed as an additional input into the policy and value networks. They use SCaNN \cite{scann} for fast approximate nearest-neighbor retrieval, while we use KD-Tree to speed up the retrieval for the exact nearest neighbor.

\section{Experiments}\label{sec:exp}

In this section, we will conduct extensive evaluations of the empirical performance of our method, PRDC. We would like to answer the following questions:

$(\romannumeral1)$ \textit{How does PRDC perform on the generally used benchmarks? (\cref{sec:benchmark})}

$(\romannumeral2)$ \textit{Can PRDC learn optimal actions, even when they only come with different states in the dataset? (\cref{sec:generalization})}

$(\romannumeral3)$ \textit{Does PRDC really alleviate the value overestimation issue? (\cref{sec:value_error})}

$(\romannumeral4)$ \textit{How does the additional hyper-parameter $\beta$ influence PRDC's performance? (\cref{sec:beta_ablation})}

Due to the nearest neighbor searching operation in \cref{equ:reg}, PRDC may be time-consuming if not implemented well. Therefore, we present an empirical comparison of the running time of PRDC and existing methods (\cref{sec:run_time}).

\subsection{Main Results on Benchmark}\label{sec:benchmark}

\begin{figure*}[ht]
    \begin{center}
        \subfigure[lineworld-easy]{
            \includegraphics[width=0.42\linewidth]{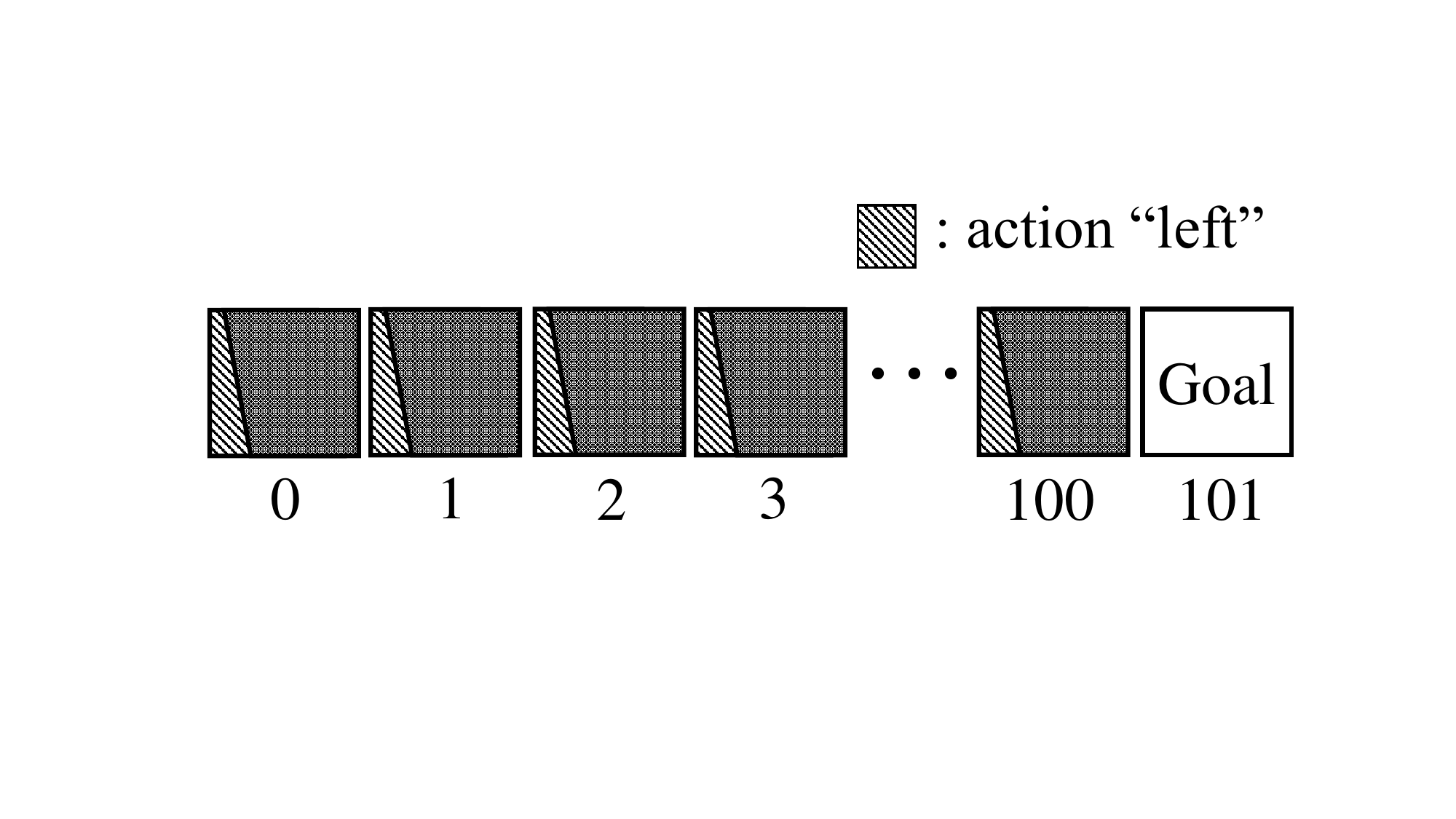}
            \label{fig:easy}
        }
        \subfigure[lineworld-medium]{
            \includegraphics[width=0.42\linewidth]{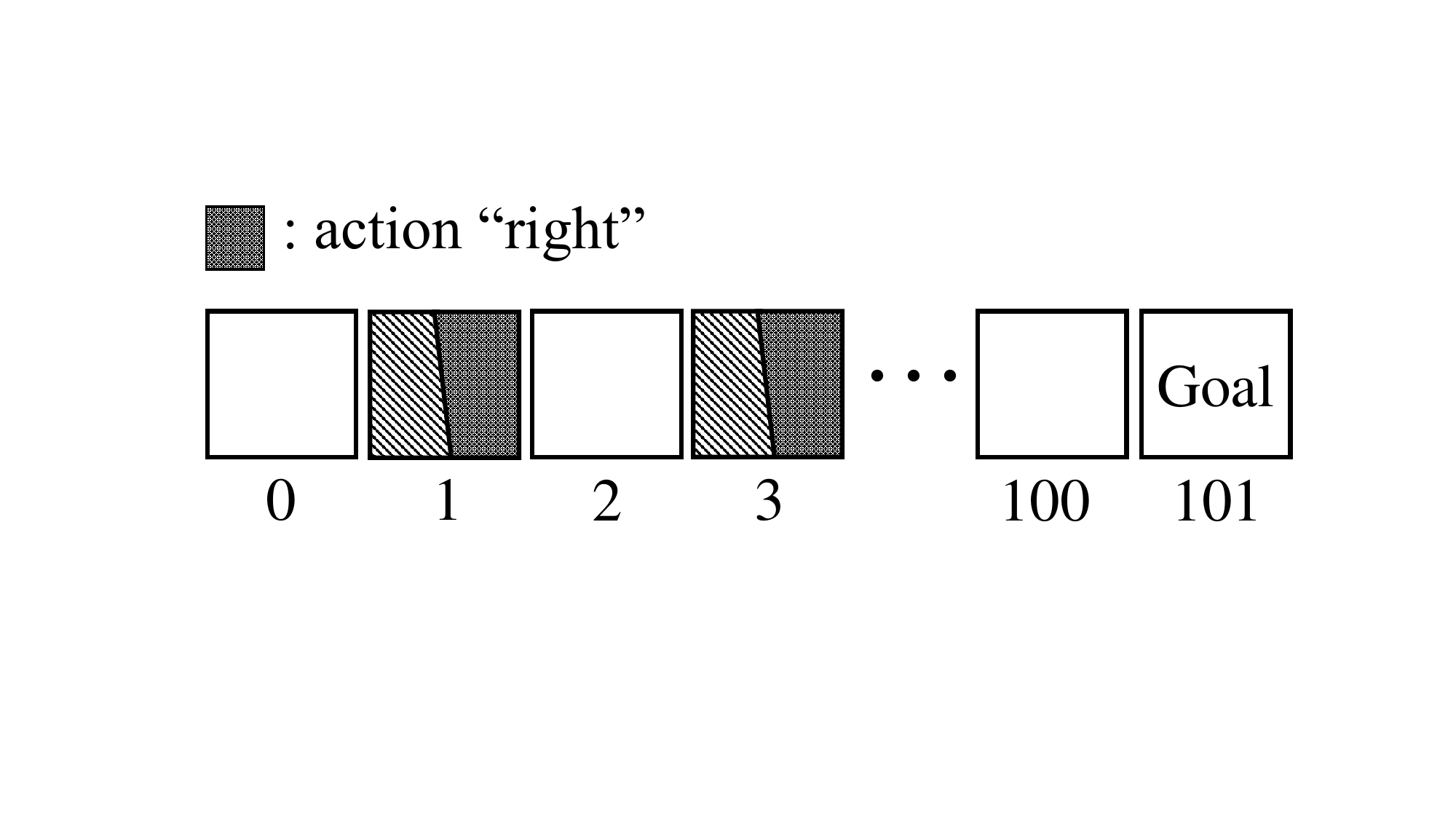}
            \label{fig:medium}
        }
        \subfigure[lineworld-hard]{
            \includegraphics[width=0.42\linewidth]{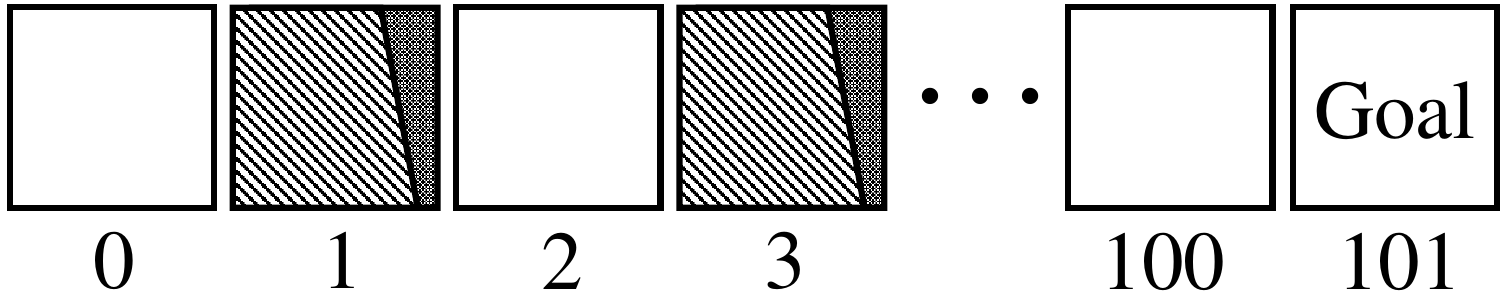}
            \label{fig:hard}
        }
        \subfigure[lineworld-superhard]{
            \includegraphics[width=0.42\linewidth]{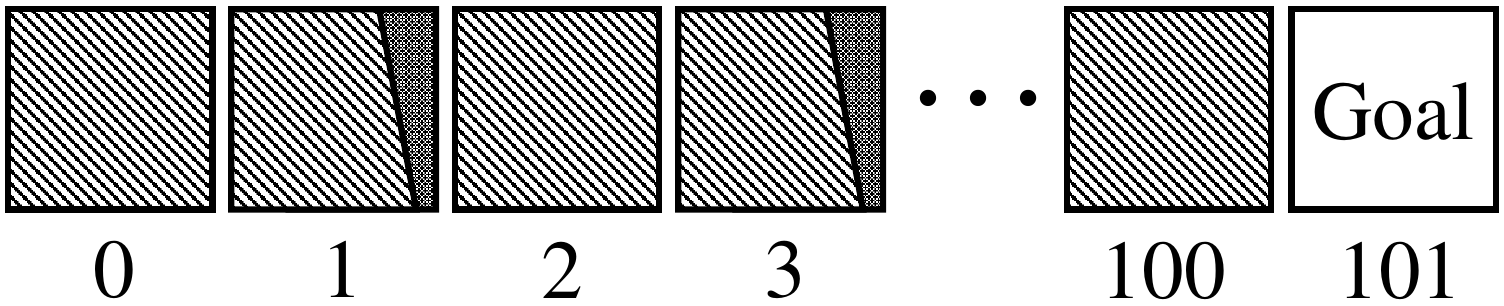}
            \label{fig:superhard}
        }
        \caption{Compositions of four lineworld datasets. In all of the four datasets, the actions are in $\{-1, +1\}$. (a) \textit{lineworld-easy}: the states are in $\{0, 1, 2, 3, \cdots, 100\}$, where at each state, the ratio of action $-1$ and $+1$ is $1:99$. (b) \textit{lineworld-medium}: The states are in $\{1, 3, 5, 7, \cdots, 99\}$, where at each state, the ratio of action $-1$ and $+1$ is $1:1$. (c) \textit{lineworld-hard}: the states are in $\{1, 3, 5, 7, \cdots, 99\}$, where at each state, the ratio of action $-1$ and $+1$ is $99:1$. (d) \textit{lineworld-superhard}: the states are in $\{0, 1, 2, 3, \cdots, 100\}$. If the states are in $\{1, 3, 5, 7, \cdots, 99\}$, the ratio of action $-1$ and $+1$ is $99:1$. If the states are in $\{0, 2, 4, 6, \cdots, 100\}$, all actions are $-1$. In all the four datasets, we collect $100$ samples for every state if there has actions.}
        \label{fig:lineworld}
    \end{center}
\end{figure*}

\begin{figure*}[ht]
    \begin{center}
        \subfigure[BEAR on lineworld-medium]{
            \includegraphics[width=0.31\linewidth]{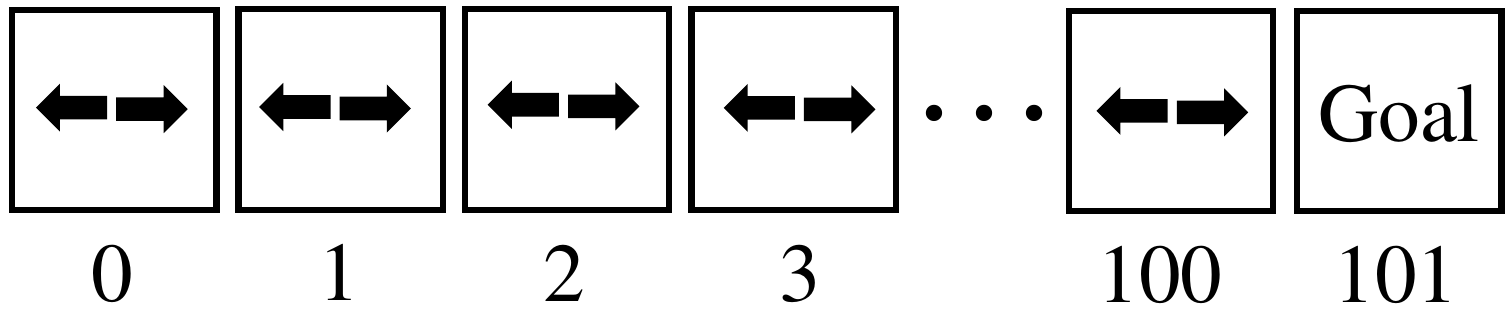}
            \label{fig:policy_bear}
        }
        \subfigure[TD3+BC on lineworld-hard]{
            \includegraphics[width=0.31\linewidth]{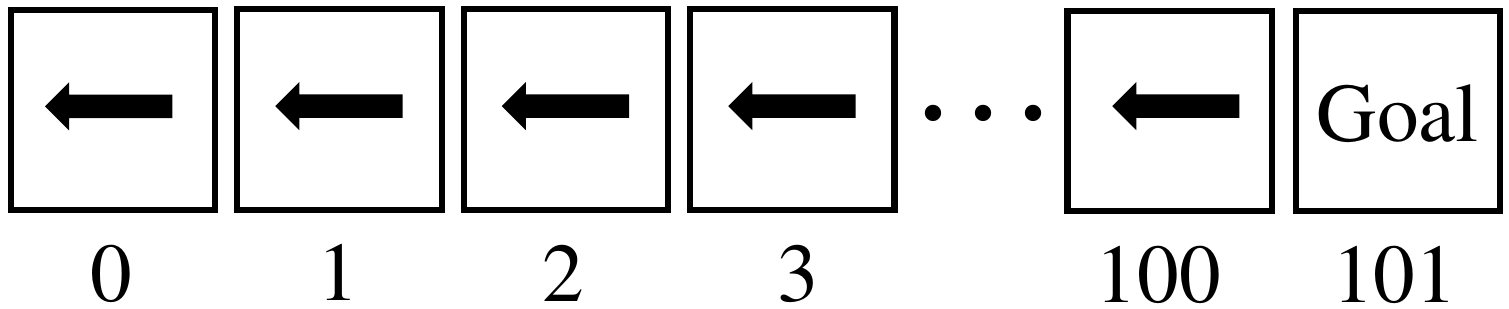}
            \label{fig:policy_td3bc}
        }
        \subfigure[PRDC on lineworld-superhard]{
            \includegraphics[width=0.31\linewidth]{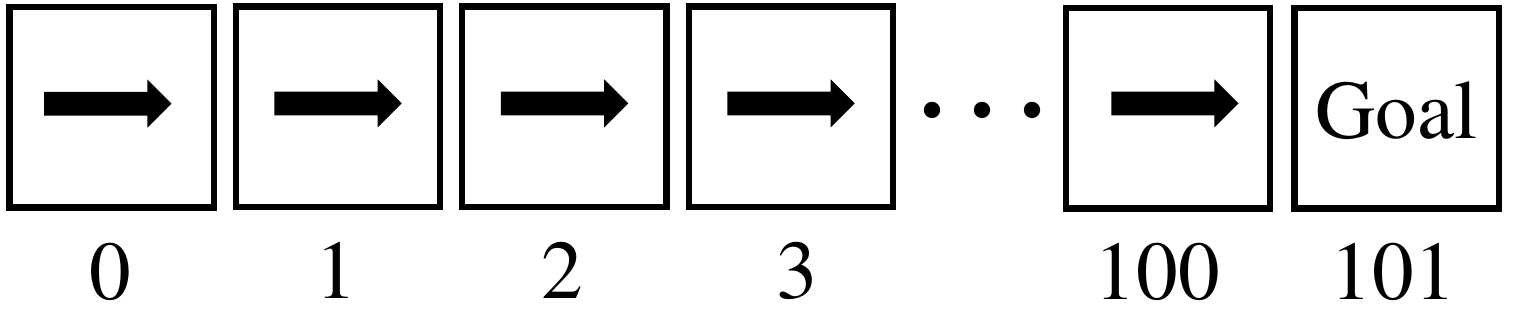}
            \label{fig:policy_prdc}
        }
        \caption{Visualization of the learned policy by BEAR, TD3+BC, and PRDC on selected tasks. (a) On the lineworld-medium task, BEAR outputs $[-\epsilon, \epsilon]$ whatever the input state is, where $\epsilon$ is a small positive number. (b) On the lineworld-hard task, TD3+BC always outputs $-1$. (c) On the lineworld-superhard task, PRDC always outputs $+1$. That is, only PRDC has learned an optimal policy.}
        \label{fig:policy}
    \end{center}
    % \vskip -0.1in
\end{figure*}

First,we choose a set of locomotion and navigation tasks from the D4RL benchmark \cite{d4rl} to be the testbed of performance comparison. All datasets take the "-\textbf{v2}" version. Below is a brief introduction to the baselines:

$\bullet$ \textit{BC} \cite{bc}, which uses mean squared error (MSE) minimization to regress the behavior policy.

$\bullet$ \textit{BCQ} \cite{bcq}, which firstly uses CVAE \cite{cvae} to fit $\mu$ and then learns a $\pi$ similar to $\mu$.
    
$\bullet$ \textit{BEAR} \cite{bear}, which regularizes the policy with its MMD with a Gaussian kernel to the behavior policy.

$\bullet$ \textit{AWAC} \cite{awac}, which applies a KL divergence constraint in the policy improvement step.

$\bullet$ \textit{CQL} \cite{cql}, which learns a conservative Q-function that lower-bounds the policy's true value.

$\bullet$ \textit{IQL} \cite{iql}, which uses expectile regression\footnote{See \url{http://www.sp.unipg.it/surwey/events/28-tutorial.html} for a tutorial on expectile regression.} to learn the policy without evaluating OOD actions.

$\bullet$ \textit{TD3+BC} \cite{td3_bc}, which adds an additional BC regularization term to TD3's policy update loss.

$\bullet$ \textit{DOGE} \cite{doge}, which constrains the policy with the convex hull of the offline dataset via a state-conditioned distance function.

$\bullet$ \textit{SPOT} \cite{spot}, which explicitly models the support set of $\pi$ and presents a density-based regularization.

We then train PRDC for $1$M steps over five seeds on every dataset, with the implementation details deferred to \cref{sec:exp_details}. The average normalized scores in the final ten evaluations are shown in \cref{tab:benchmark}, where we use the scores from \cite{offline_overview} for AWAC, the scores from \cite{spot} for SPOT, and all other baselines take the results reported in \cite{doge}. From \cref{tab:benchmark}, we can see that PRDC performs well and achieves state-of-the-art performance in \textbf{13 out of 18} total tasks.

\subsection{Generalization}\label{sec:generalization}

To test whether PRDC can learn optimal actions even when they do not appear with the current state but only with different states in the dataset, we create a lineworld environment inspired by \cite{toy_env}. It is a deterministic environment, with $102$ cells of length $1$ connected from left to right. The state space is continuous and in $[0, 101]$\footnote{We use $[a,b]$ to denote the set $\{x|a\le x \le b, x\in \mathbb{R}\}$ and $(a,b]$ to denote the set $\{x|a < x \le b, x\in \mathbb{R}\}.$}, where the initial state randomly falls at $[0,1]$. The action space is also continuous and in $[-1,1]$, where a positive action means the agent goes right and a negative action means it goes left. At every step, the reward is 100 if the agent reaches the goal, i.e., its state falls at $(100, 101]$; otherwise the reward is 0. The episode ends only if the agent reaches the goal or the episode length exceeds $105$.

\begin{table}[h]
    \caption{Accomplishments of BEAR, TD3+BC, and PRDC on lineworld-easy, lineworld-medium, lineworld-hard and lineworld-superhard. $\surd$ means accomplish, while $\times$ means not.}
    \label{tab:lineworld}
    \begin{center}
        \begin{tabular}{l|c|c|c} 
            \toprule
            & BEAR & TD3+BC & PRDC \\
            \hline
            lineworld-easy & $\surd$ & $\surd$ & $\surd$ \\
            \hline
            lineworld-medium & $\times$ & $\surd$ & $\surd$ \\
            \hline
            lineworld-hard & $\times$ & $\times$ & $\surd$\\
            \hline
            lineworld-superhard & $\times$ & $\times$ & $\surd$ \\
            \midrule
        \end{tabular}
    \end{center}
    % \vskip -0.2in
\end{table}

We collect four datasets on lineworld. They are named \textit{lineworld-easy}, \textit{lineworld-medium}, \textit{lineworld-hard}, and \textit{lineworld-superhard} in order based on their difficulties. A visual illustration and description of them are in \cref{fig:lineworld}. We then train BEAR, TD3+BC, and PRDC for $10$k steps on each of the four datasets and evaluate the learned policies for ten episodes over five seeds when the training finishes. Moreover, the method accomplishes the task only when the learned policy successfully reaches the goal in all the $5 \times 10$ evaluations. The result is shown in \cref{tab:lineworld}, where we can see that only PRDC can always learn an optimal policy even on the most difficult lineworld-superhard task.

The policies learned by all three methods are visualized in \cref{fig:policy}. BEAR fails on line-medium where the ratio of action $-1$ and $+1$ is $1:1$. BEAR always outputs actions approaching $0$, the average of $-1$ and $+1$. The MMD distance minimization objective in BEAR will converge to action $0$ if not combined with RL optimization. While in this lineworld environment, the reward is sparse, and the regularization is too conservative. Thus the RL optimization may make little difference. The same reason applies to why TD3+BC always outputs actions approaching $-1$ on lineworld-hard, where the ratio of action $-1$ and $+1$ is $99:1$. On the other hand, PRDC can learn an optimal policy that always outputs action $+1$. But from \cref{fig:superhard}, we see that there are no action $+1$ if the state is in $\{0, 2, 4, \cdots, 100\}$ and it only comes with states in $\{1, 3, 5, \cdots, 99\}$. Thanks to the softer nearest neighbor regularization, PRDC can learn optimal actions even when they never occur with the current state.

\subsection{Value Estimation Error}\label{sec:value_error}

\begin{figure}[t]
    \begin{center}
        \includegraphics[width=0.9\columnwidth]{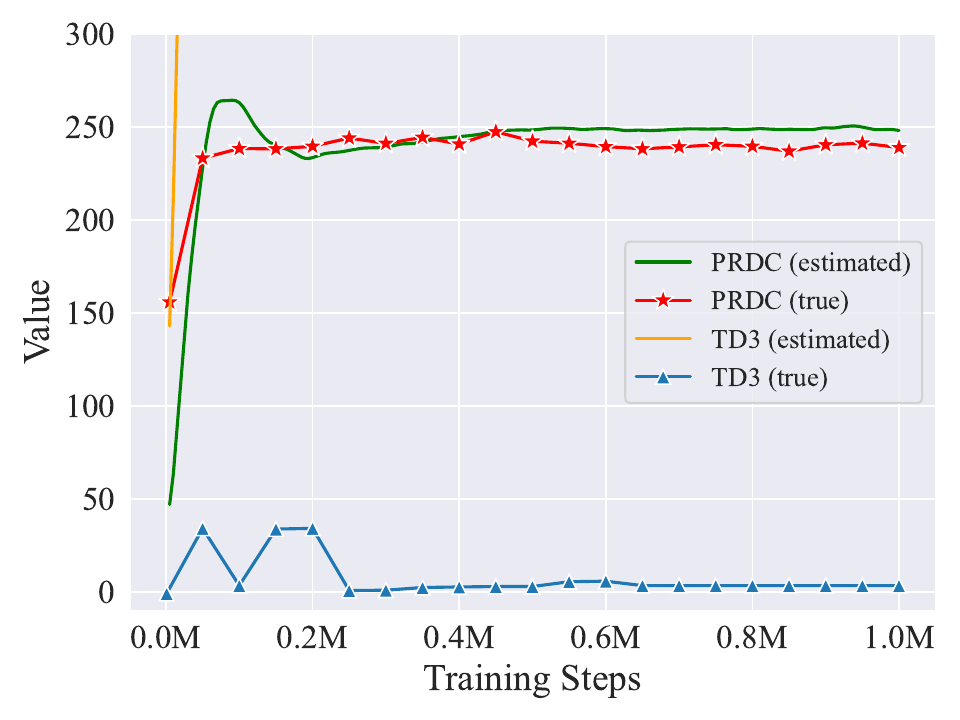}
        \caption{Comparison of PRDC's estimated value, PRDC's true value, TD3's estimated value, and TD3's true value.}
        \label{fig:value_estimation}
    \end{center}
    % \vskip -0.2in
\end{figure}

As discussed in \cref{sec:over_value}, value overestimation is a detrimental issue in offline RL. Theoretical analysis in \cref{sec:theory} has shown that PRDC can alleviate this issue. Here is some empirical evidence. Since PRDC is built upon TD3, we train PRDC and TD3 on the hopper-medium-v2 dataset for $1$M steps. During training, we randomly sample $10$ states from the initial state distribution every $5$k steps and predict actions on these states by the current policy. We then get these state-action pairs' mean estimated Q-value. True values are gotten by Monte Carlo roll out \cite{rlbook}. The result is shown in \cref{fig:value_estimation}, where we see that our proposed point-to-set distance regularization effectively improves the value overestimation problem.

Regarding the point-to-set distance, we also observe an interesting phenomenon. We re-train AWAC, CQL, TD3+BC, and IQL on the hopper-medium-v2 dataset for $3$k steps. All methods take the implementations from CORL~\cite{corl}. During training, we evaluate the policy every $100$ steps. We collect all the generated state-action pairs in the evaluation and calculate their mean point-to-set distance with $\beta=1$. \cref{fig:distance} illustrates the result. From it we can observe an inverse relationship between the policy's normalized return and the mean point-to-set distance, which empirically motivates us that the point-to-set distance minimization objective is a reasonable target. Moreover, we speculate that this inverse relationship exists because a small mean point-to-set distance corresponds to an accurate value function, on which the policy update will return us a safely improved policy.

\begin{figure}[t]
    \begin{center}
        \includegraphics[width=0.9\columnwidth]{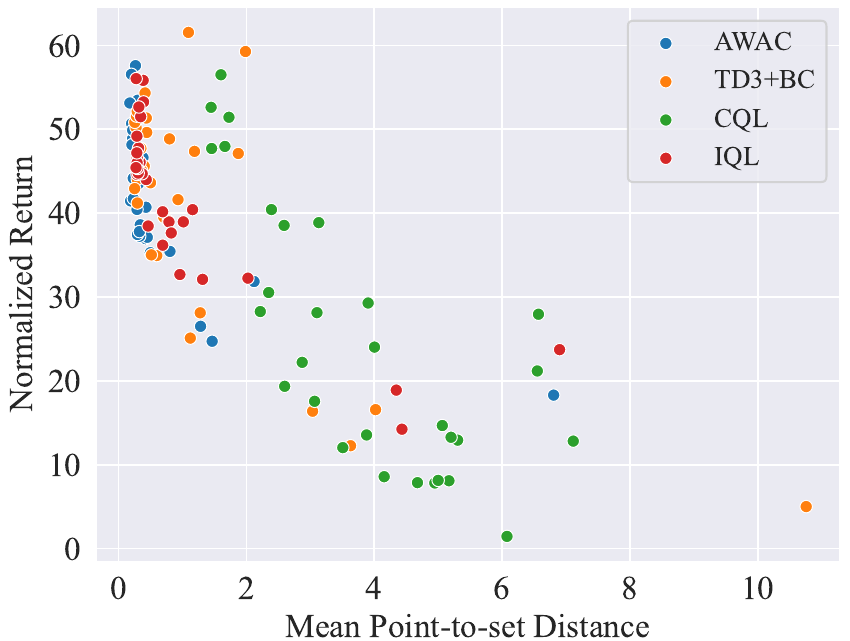}
        \caption{Normalized returns and mean point-to-set distances of baseline methods during training.}
        \label{fig:distance}
    \end{center}
    \vskip -0.2in
\end{figure}

\subsection{Sensitivity on the Hyper-parameter \texorpdfstring{$\beta$}{beta}}\label{sec:beta_ablation}

The hyper-parameter $\beta$ controls how much conservatism is imposed when updating policy. To see how it influences the policy's performance, we conduct an ablation study of $\beta$ on the hopper, halfcheetah, and walker2d datasets. We train PRDC on each dataset for $1$M steps with $\beta \in \{0.1, 1, 2, 5, 10, 1000\}$ and keep other hyper-parameters the same. \cref{fig:main_ablation_beta_hopper} shows some of the results, with the complete results deferred to \cref{sec:more_results}.

\begin{figure}[ht]
    \begin{center}
        \includegraphics[width=0.9\columnwidth]{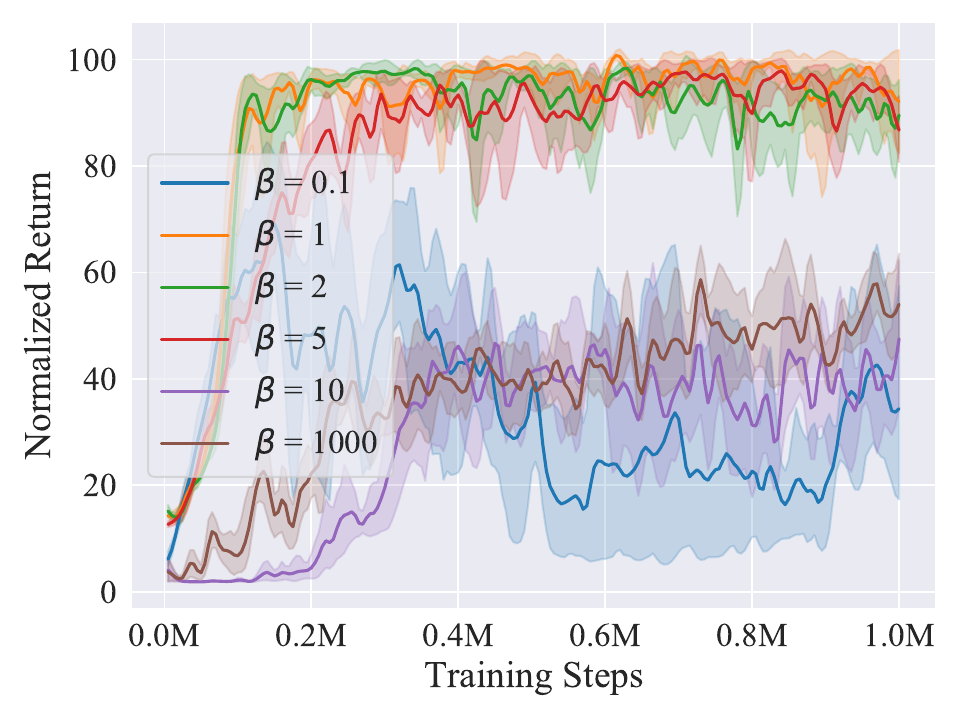}
        \label{fig:main_ablation_beta_hopper}
        \caption{Performances of PRDC on hopper-medium-v2 with different $\beta$, a hyper-parameter in \cref{def:psd}.}
    \end{center}
    \vskip -0.2in
\end{figure}

From \cref{fig:main_ablation_beta_hopper}, we can see: $(\romannumeral1)$ When $\beta$ is $0.1$, PRDC performs bad. It is because a small $\beta$ will ignore the difference in the state when searching for the nearest neighbor. However, in-distribution actions are coupled with states. Thus just constraining the policy to output an action that has occurred in the dataset may not be enough to avoid OOD actions. $(\romannumeral2)$ When $\beta$ is $1000$, PRDC performs not well. A large $\beta$ is too conservative. $(\romannumeral3)$ When $\beta \in \{1,2,5\}$, PRDC performs well. We also observe that PRDC has high performance in a wide range of $\beta$ on all datasets, meaning we do not need much effort to tune it.

\subsection{Run Time}\label{sec:run_time}

Time complexity is also a challenge in offline RL. We run PRDC and baselines on the same dataset and machine for $1$M steps. Baseline implementations are the same as those in \cref{sec:value_error}. The result is shown in \cref{fig:time_cost}, where we can read that PRDC runs at an acceptable speed and even performs faster than CQL on the hopper-medium-v2 dataset. Thanks to KD-Tree, we have a highly efficient implementation of PRDC.

\begin{figure}[ht]
    \begin{center}
        \includegraphics[width=0.9\columnwidth]{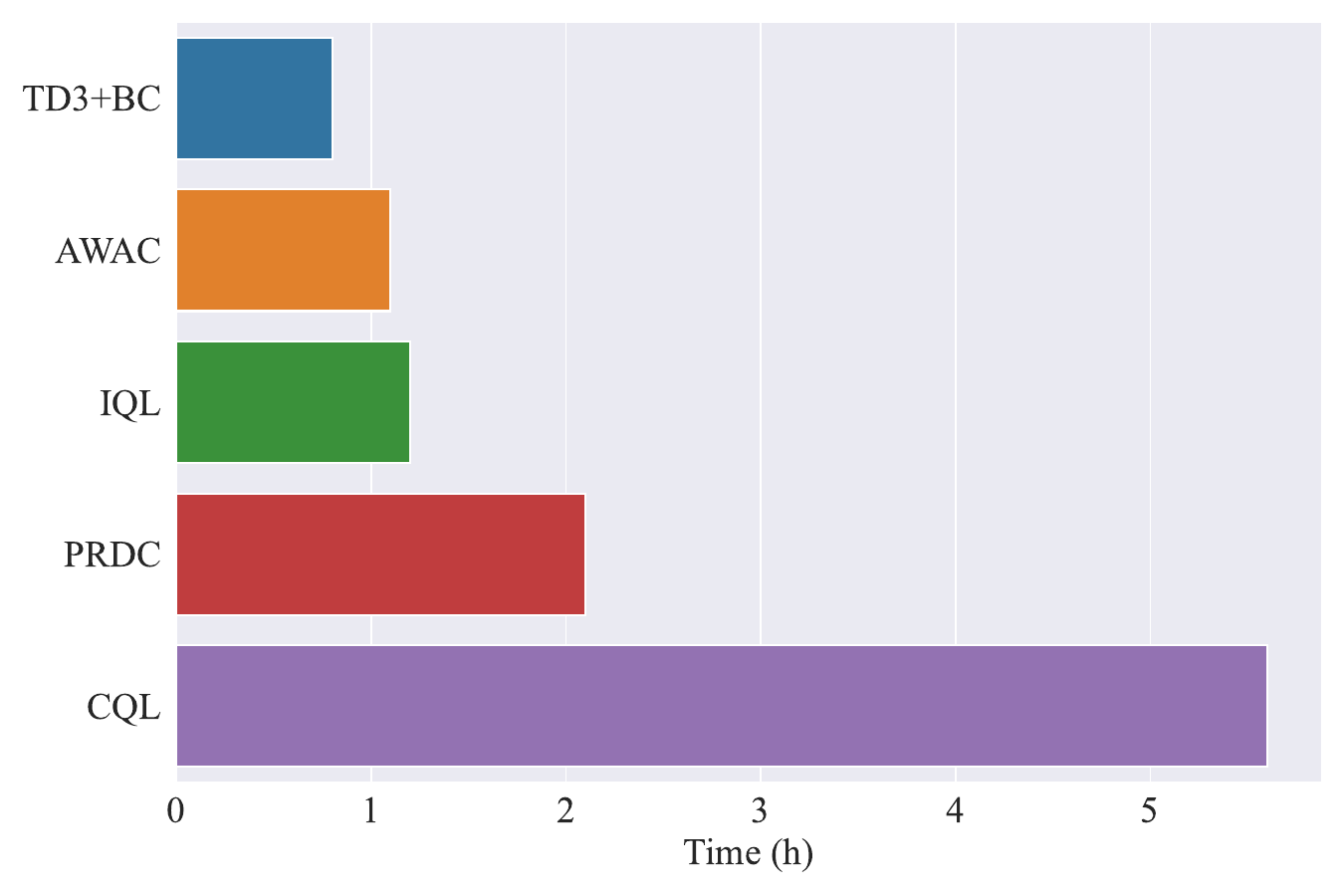}
        \caption{Run time of PRDC and baselines on hopper-medium-v2.}
        \label{fig:time_cost}
    \end{center}
    \vskip -0.2in
\end{figure}

\section{Discussion}\label{sec:discuss}

Currently, the dataset constraint is implemented as the nearest neighbor restriction. However, our fundamental idea of dataset constraint is to leverage as much information as possible while constraining the policy into the dataset. To this end, one direct idea is to search for more than one nearest neighbor and use them together to construct a new constraint. Nevertheless, how to aggregate multiple nearest neighbors needs further consideration. We made a naive attempt by regularizing the policy towards the $k$ nearest neighbors' \textit{average action} with different $k \in \{1, 2, 4\}$. Due to the space limit, we defer the result to \cref{fig:full_ablation_k} in \cref{sec:more_results}. From \cref{fig:full_ablation_k}, we see that this average action restriction does not improve or even hurt the performance, perhaps due to the non-optimal ones in actions of the $k$ nearest neighbors. Indeed, the aggregation of regularization on multiple nearest neighbors is a promising direction worth in-depth study. Moreover, parameterizing the searching process as a neural network and learning to retrieve information optimally from the dataset is also worth investigating.

Although we have shown that an efficient implementation can effectively speed up PRDC in \cref{fig:time_cost}, the nearest neighbor retrieval may still be a bottleneck to minimizing run time, especially when the dataset is large or the states are high-dimensional images. Some ideas from existing studies may help. For example, we can learn a low-dimensional representation, which is then used to calculate the point-to-set distance. We can also use methods like SCaNN for fast approximate nearest neighbor retrieval. 

These further discussions are beyond the scope of this paper, and we leave them for future works.

\section{Conclusion}

We propose dataset constraint, a new type of policy regularization method. Unlike the commonly used distribution constraint and support constraint, which limit the policy to actions that have occurred with the current state, the proposed dataset constraint is less conservative and allows the policy to learn from all actions in the dataset. Empirical and theoretical evidence shows that the dataset constraint can effectively alleviate offline RL's fundamentally challenging value overestimation issue. It is simple and effective with new state-of-the-art performance on the D4RL \cite{d4rl} benchmark but only one additional hyper-parameter compared to \cite{td3_bc}. Overall, this is the first attempt to constrain the policy from a dataset perspective in offline RL. We hope our work can inspire more relevant research as stated in \cref{sec:discuss}.

\section*{Acknowledgements}
This work is supported by the National Key R\&D Program of China (2022ZD0114804), the National Science Foundation of China (62276126), and the Fundamental Research Funds for the Central Universities (14380010). The authors would like to thank Chengxing Jia, Chenxiao Gao, Chenghe Wang, Feng Chen and the anonymous reviewers for their support and helpful discussions on improving the paper. 

\bibliography{references}
\bibliographystyle{icml2023}

%%%%%%%%%%%%%%%%%%%%%%%%%%%%%%%%%%%%%%%%%%%%%%%%%%%%%%%%%%%%%%%%%%%%%%%%%%%%%%%
%%%%%%%%%%%%%%%%%%%%%%%%%%%%%%%%%%%%%%%%%%%%%%%%%%%%%%%%%%%%%%%%%%%%%%%%%%%%%%%
% APPENDIX
%%%%%%%%%%%%%%%%%%%%%%%%%%%%%%%%%%%%%%%%%%%%%%%%%%%%%%%%%%%%%%%%%%%%%%%%%%%%%%%
%%%%%%%%%%%%%%%%%%%%%%%%%%%%%%%%%%%%%%%%%%%%%%%%%%%%%%%%%%%%%%%%%%%%%%%%%%%%%%%
\newpage
\appendix
\onecolumn
\section{Proofs}\label{sec:proofs}

\begin{lemma}\label{lem:pos_ineq}
For $\forall a, b \in\mathbb{R}_+$, the following inequality holds,
$$
\sqrt{a^2 + b^2} \le a + b.
$$
\end{lemma}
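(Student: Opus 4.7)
The plan is to reduce the claim to the obvious inequality $2ab \geq 0$ by squaring both sides. Since $a, b \in \mathbb{R}_+$, both $\sqrt{a^2 + b^2}$ and $a + b$ are non-negative, so the map $x \mapsto x^2$ is monotone on the relevant range and squaring is an equivalence-preserving operation. Thus the target inequality holds if and only if $a^2 + b^2 \leq (a + b)^2$.

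First I would expand the right-hand side via $(a+b)^2 = a^2 + 2ab + b^2$, and then subtract $a^2 + b^2$ from both sides to reduce the claim to $0 \leq 2ab$. This final inequality is immediate from $a, b \geq 0$, which closes the argument. The whole proof is essentially three lines and requires no additional lemmas or assumptions from the paper.

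There is no real obstacle here; the only thing to be careful about is justifying the squaring step, which is valid precisely because both sides are non-negative (this uses the hypothesis $a, b \in \mathbb{R}_+$ to ensure $a + b \geq 0$, and the square root is by convention the non-negative branch). If one wanted a slightly slicker presentation, one could instead start from $0 \leq 2ab$, add $a^2 + b^2$ to both sides to obtain $a^2 + b^2 \leq (a+b)^2$, and then take square roots using monotonicity of $\sqrt{\cdot}$ on $[0, \infty)$ together with $|a + b| = a + b$. Either ordering yields a clean self-contained proof.
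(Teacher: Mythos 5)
Your proof is correct: since both sides are non-negative, squaring is an equivalence, and the claim reduces to $a^2+b^2\le a^2+2ab+b^2$, i.e.\ $0\le 2ab$, which follows from $a,b\ge 0$. The paper states this lemma without proof (treating it as elementary), and your argument is exactly the standard justification one would supply, with the squaring step properly licensed by non-negativity.
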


\begin{lemma}[Triangle inequality]\label{lem:tri_ineq}
For $\forall x, y \in \mathbb{R}^m$, the following inequality holds,
$$
\|x + y\| \le \|x\| + \|y\|.
$$
\end{lemma}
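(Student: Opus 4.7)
The plan is to establish the L2 triangle inequality by the standard route: square both sides (legitimate since both quantities are non-negative), expand $\|x+y\|^2$ via the inner product, bound the cross term using Cauchy--Schwarz, and then take square roots. Concretely, I would first write $\|x+y\|^2 = \langle x+y, x+y\rangle = \|x\|^2 + 2\langle x, y\rangle + \|y\|^2$ using bilinearity of the Euclidean inner product, and then aim to show $\langle x, y\rangle \le \|x\|\|y\|$ so that the right-hand side is upper bounded by $(\|x\|+\|y\|)^2$.

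The Cauchy--Schwarz step is the one genuine content of the argument, and I would prove it in a self-contained way rather than invoke it as a black box. The cleanest route is to observe that for any $t \in \mathbb{R}$, the non-negativity $0 \le \|x - ty\|^2 = \|x\|^2 - 2t\langle x, y\rangle + t^2\|y\|^2$ holds. If $y = 0$, the inequality $\langle x,y\rangle \le \|x\|\|y\|$ is trivial; otherwise choose $t = \langle x, y\rangle / \|y\|^2$ to minimize the quadratic, which after simplification yields $\langle x, y\rangle^2 \le \|x\|^2 \|y\|^2$, hence $|\langle x, y\rangle| \le \|x\|\|y\|$ and in particular $\langle x,y\rangle \le \|x\|\|y\|$.

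Plugging this back gives $\|x+y\|^2 \le \|x\|^2 + 2\|x\|\|y\| + \|y\|^2 = (\|x\|+\|y\|)^2$, and taking the non-negative square root (both sides are non-negative) yields $\|x+y\| \le \|x\|+\|y\|$, as desired.

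The main obstacle is really just the Cauchy--Schwarz step; everything else is algebraic expansion and monotonicity of the square root on $[0, \infty)$. An alternative I would mention is to avoid Cauchy--Schwarz entirely by a coordinate argument on $\mathbb{R}^m$ (Minkowski's inequality for $p=2$), but the inner-product approach above is shorter and more transparent for the L2 norm specifically, so I would present that.
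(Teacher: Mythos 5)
Your argument is correct and complete: the expansion of $\|x+y\|^2$ via the inner product, the self-contained derivation of Cauchy--Schwarz from $0 \le \|x-ty\|^2$ with the minimizing $t$ (and the separate trivial case $y=0$), and the final square-root step are all sound. Note that the paper itself states this lemma without any proof, treating it as a standard fact, so there is no authorial argument to compare against; your write-up is simply the textbook proof, correctly executed.
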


\begin{lemma}\label{lem:error}
For $\forall (s,a), (\hat{s},\hat{a})\in\mathcal{S}\times\mathcal{A}$, if $\|(\beta s)\oplus a-(\beta \hat{s})\oplus \hat{a}\|\le \epsilon, \beta > 0, \epsilon > 0$, then the following inequalities hold,
\begin{itemize}
    \item $\|s-\hat{s}\|\le \epsilon/\beta$,
    \item $\|a-\hat{a}\|\le\epsilon$,
    \item $\|s\oplus a- \hat{s}\oplus \hat{a}\|\le (1/\beta + 1)\epsilon$.
\end{itemize}
\end{lemma}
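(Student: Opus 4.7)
The plan is to unpack the definition of the concatenation operator $\oplus$ under the L2 norm and then read off each of the three bullets from straightforward algebraic manipulations. Writing $(\beta s)\oplus a - (\beta\hat{s})\oplus \hat{a}$ as the single stacked vector $(\beta(s-\hat{s}),\, a-\hat{a})$, its squared L2 norm splits additively as $\beta^2\|s-\hat{s}\|^2 + \|a-\hat{a}\|^2$, so the hypothesis is equivalent to
\begin{equation*}
    \beta^2\|s-\hat{s}\|^2 + \|a-\hat{a}\|^2 \;\le\; \epsilon^2.
\end{equation*}

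For the first two bullets, I would simply note that each of the two nonnegative summands on the left is individually bounded by $\epsilon^2$. Dropping the $\|a-\hat{a}\|^2$ term and taking square roots (using $\beta>0$) yields $\|s-\hat{s}\|\le \epsilon/\beta$, and symmetrically dropping the $\beta^2\|s-\hat{s}\|^2$ term yields $\|a-\hat{a}\|\le \epsilon$. No further machinery is needed here beyond monotonicity of the square root on $\mathbb{R}_+$.

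For the third bullet, I would rewrite $\|s\oplus a - \hat{s}\oplus \hat{a}\|$ as $\sqrt{\|s-\hat{s}\|^2 + \|a-\hat{a}\|^2}$ using the same concatenation identity. Then applying \cref{lem:pos_ineq} with $a:=\|s-\hat{s}\|$ and $b:=\|a-\hat{a}\|$ gives
\begin{equation*}
    \sqrt{\|s-\hat{s}\|^2+\|a-\hat{a}\|^2}\;\le\;\|s-\hat{s}\| + \|a-\hat{a}\|,
\end{equation*}
and substituting the two bounds already established from the first two bullets gives $\|s\oplus a - \hat{s}\oplus\hat{a}\|\le \epsilon/\beta + \epsilon = (1/\beta + 1)\epsilon$, as required.

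There is no real obstacle here; the only thing to be a bit careful about is the consistent use of the definition $\|u\oplus v\|^2 = \|u\|^2 + \|v\|^2$ (which relies on $\|\cdot\|$ being the L2 norm, as fixed in \cref{def:continue}) and the fact that the scalar $\beta>0$ pulls out as $\beta^2$ inside the squared norm. Lemma \ref{lem:tri_ineq} is not needed for this particular lemma, though \cref{lem:pos_ineq} is the key ingredient for the third bullet.
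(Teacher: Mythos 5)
Your proof is correct and follows essentially the same route as the paper's: both arguments bound each block of the concatenated vector by the norm of the whole (you via the explicit squared-norm decomposition $\beta^2\|s-\hat{s}\|^2+\|a-\hat{a}\|^2\le\epsilon^2$, the paper by stating $\beta\|s-\hat{s}\|\le\|(\beta s)\oplus a-(\beta\hat{s})\oplus\hat{a}\|$ directly), and both invoke \cref{lem:pos_ineq} for the third bullet. Your version is just slightly more explicit about why the block bounds hold.
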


\begin{proof}
Since $\beta>0$, we have
$$
\beta \|s-\hat{s}\| \le \|(\beta s)\oplus a-(\beta \hat{s})\oplus \hat{a}\| \le \epsilon.
$$
By dividing both sides of the above inequality by $\beta$, we achieve $\|s-\hat{s}\|\le \epsilon/\beta$. Similarly, we can get $\|a-\hat{a}\|\le\epsilon$. Thus,
$$
\|s\oplus a- \hat{s}\oplus \hat{a}\|\le \|s-\hat{s}\| + \|a-\hat{a}\| \le (1/\beta + 1)\epsilon,
$$
where the first inequality comes from \cref{lem:pos_ineq}.
\end{proof}

\thmq*

\begin{proof}
Let
$$
(\hat{s}, \hat{a}) = \argmin_{(\hat{s},\hat{a}) \in \mathcal{D}} d_\mathcal{D}^\beta (s, \pi(s)),
$$
where $\hat{a} = \mu(\hat{s})$. By expanding the left side of \cref{equ:q_smooth}, we get
$$
\begin{aligned}
\|Q(s,\pi_\phi(s)) - Q(s,\mu(s))\|
&= \|Q(s,\pi_\phi(s)) - Q(\hat{s}, \hat{a})+Q(\hat{s},\hat{a})-Q(s,\mu(s))\|\\
&\overset{(\romannumeral1)}{\le} \|Q(s,\pi_\phi(s)) - Q(\hat{s}, \hat{a})\| + \|Q(\hat{s},\hat{a})-Q(s,\mu(s))\|\\
&\overset{(\romannumeral2)}{\le} K_Q(\|s\oplus \pi(s) - \hat{s} \oplus \hat{a}\| + \|\hat{s}\oplus\mu(\hat{s})-s\oplus \mu(s)\|) \\
&\overset{(\romannumeral3)}{\le} K_Q \left((1/\beta+1)\epsilon + \|\hat{s}\oplus\mu(\hat{s}) - s \oplus \mu(s)\|\right)\\
&\overset{(\romannumeral4)}{\le} K_Q\left((1/\beta+1)\epsilon + \|\hat{s} - s\| + \|\mu(\hat{s}) - \mu(s)\| \right) \\
&\overset{(\romannumeral5)}{\le} K_Q \left((1/\beta+1)\epsilon + \|\hat{s} - s\|+K_\mu \|\hat{s}-s\|\right)\\
&\overset{(\romannumeral6)}{\le} \left((K_\mu+2)/\beta + 1\right)K_Q\epsilon.
\end{aligned}
$$
Here, $(\romannumeral1)$ is due to \cref{lem:tri_ineq}; $(\romannumeral2)$ is due to \cref{assum:q_continue}; $(\romannumeral3)$ is due to \cref{lem:error}; $(\romannumeral4)$ is due to \cref{lem:pos_ineq}; $(\romannumeral5)$ is due to \cref{assum:mu_continue}; and $(\romannumeral6)$ is due to \cref{lem:error}.
\end{proof}

\begin{lemma}\label{lem:abs_inq}
For function $f: S\subset \mathbb{R}^m \to \mathbb{R}$, we have that
$$
\abs{\int_a^b f(x) \dd x} \le \int_a^b \abs{f(x)}\dd x,
$$
where $a\le b$, and $[a,b] \subseteq S$.
\end{lemma}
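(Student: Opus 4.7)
The plan is to prove this standard triangle inequality for integrals by the classical sandwich argument. The key observation is that for every $x \in [a,b]$, the pointwise bound $-\abs{f(x)} \le f(x) \le \abs{f(x)}$ holds by definition of the absolute value. I would invoke monotonicity of the (Riemann or Lebesgue) integral with respect to the integrand, which applies whenever $f$ is integrable and hence $\abs{f}$ is as well, to conclude
$$
-\int_a^b \abs{f(x)}\dd x \;\le\; \int_a^b f(x)\dd x \;\le\; \int_a^b \abs{f(x)}\dd x.
$$
A single line then rewrites this two-sided bound as the desired inequality, using the elementary fact that $\abs{y}\le c$ iff $-c\le y \le c$.

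As a slightly cleaner alternative, one could decompose $f = f^+ - f^-$ into its positive and negative parts, with $f^+(x)=\max\{f(x),0\}$ and $f^-(x)=\max\{-f(x),0\}$, noting $\abs{f}=f^++f^-$ and both parts are nonnegative. Then
$$
\abs{\int_a^b f(x)\dd x} = \abs{\int_a^b f^+(x)\dd x - \int_a^b f^-(x)\dd x} \le \int_a^b f^+(x)\dd x + \int_a^b f^-(x)\dd x = \int_a^b \abs{f(x)}\dd x,
$$
where the middle inequality is the ordinary triangle inequality for real numbers applied to the two nonnegative integrals. I would probably favor the first (sandwich) route since it matches the style of the other appendix lemmas and avoids introducing extra notation.

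There is no real obstacle here: the statement is a textbook inequality whose only subtlety is an implicit integrability assumption on $f$ and the interpretation of $\int_a^b$ when the stated domain is $S\subset\mathbb{R}^m$ with $m\ge 1$. I would add a one-line remark that $[a,b]\subseteq S$ is to be read as a line segment (equivalently, the inequality is invoked with $m=1$ or along a parameterization), so that $\int_a^b$ is a well-defined scalar integral. Given that, the proof is three lines of the monotonicity argument sketched above and requires nothing beyond \cref{lem:tri_ineq} in spirit.
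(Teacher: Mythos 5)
Your primary (sandwich) argument is exactly the paper's proof: bound $-\abs{f(x)}\le f(x)\le\abs{f(x)}$ pointwise, integrate, and rewrite the two-sided bound as the absolute-value inequality. The proposal is correct and takes essentially the same approach; the extra remarks about integrability and the interpretation of $[a,b]\subseteq S$ are reasonable but not needed to match the paper.
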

\begin{proof}
For any $ x \in S$, the following inequality holds,
$$
-\abs{f(x)} \le f(x) \le \abs{f(x)}.
$$
Thus,
$$
-\int_a^b \abs{f(x)} \dd x \le \int_a^b f(x) \dd x \le \int_a^b \abs{f(x)} \dd x.
$$
Therefore,
$$
\abs{\int_a^b f(x) \dd x} \le \int_a^b \abs{f(x)}\dd x.
$$
Thus, we finish the proof.
\end{proof}

\begin{lemma}[Lemma 1 of \cite{ddpg_convergence}]\label{lem:d_bound}
With \cref{assum:p_continue}, the following inequality holds,
$$
\int_\mathcal{S} \abs{d^\pi(s) - d^\mu(s)}\dd s \le CK_\mathcal{P} \max_{s\in \mathcal{S}}\|\pi(s)-\mu(s)\|,
$$
where $C$ is a positive constant.
\end{lemma}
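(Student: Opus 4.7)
The plan is to express the difference $d^\pi - d^\mu$ via a telescoping / resolvent identity so that a single swap of $\mu$ for $\pi$ in the transition dynamics is isolated, and then invoke \cref{assum:p_continue} exactly once. Concretely, I would define the Markov operator $P^\pi$ acting on densities by $(P^\pi f)(s') = \int_\mathcal{S} \mathcal{P}(s'|s,\pi(s)) f(s)\,ds$, so that the series expression for the occupancy measure becomes $d^\pi = (1-\gamma)\sum_{t=0}^\infty \gamma^t (P^\pi)^t p_0 = (1-\gamma)(I - \gamma P^\pi)^{-1} p_0$. The resolvent identity $A^{-1}-B^{-1} = A^{-1}(B-A)B^{-1}$ applied to $A = I - \gamma P^\pi$ and $B = I - \gamma P^\mu$ then yields the clean decomposition
$$d^\pi - d^\mu \;=\; \gamma\,(I - \gamma P^\pi)^{-1}\,(P^\pi - P^\mu)\, d^\mu.$$

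Next I would take the $L^1$ norm and bound the two factors separately. Because $P^\pi$ is a stochastic kernel, it is a contraction in $L^1$, so $\|(I - \gamma P^\pi)^{-1}\|_{L^1 \to L^1} \le \sum_{t=0}^\infty \gamma^t = 1/(1-\gamma)$. For the remaining factor, I would apply Fubini together with \cref{lem:abs_inq} to obtain
$$\|(P^\pi - P^\mu) d^\mu\|_1 \;\le\; \int_\mathcal{S} d^\mu(s)\int_\mathcal{S} \bigl|\mathcal{P}(s'|s,\pi(s)) - \mathcal{P}(s'|s,\mu(s))\bigr|\, ds'\, ds.$$
The inner integral is then controlled by \cref{assum:p_continue}, which (read as a total-variation Lipschitz bound in the action) gives $\int_\mathcal{S} |\mathcal{P}(s'|s,\pi(s)) - \mathcal{P}(s'|s,\mu(s))|\,ds' \le K_\mathcal{P}\|\pi(s)-\mu(s)\|$. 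Pulling out $\max_s\|\pi(s)-\mu(s)\|$ and using $\int d^\mu(s)\,ds = 1$ leaves a bound of $K_\mathcal{P}\max_s\|\pi(s)-\mu(s)\|$ on that factor.

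Combining the two bounds gives $\|d^\pi - d^\mu\|_1 \le \frac{\gamma}{1-\gamma} K_\mathcal{P} \max_{s}\|\pi(s)-\mu(s)\|$, which is the desired inequality with $C = \gamma/(1-\gamma)$. The step I expect to require the most care is the interpretation of \cref{assum:p_continue}: as written it looks like a pointwise bound on the density value $\mathcal{P}(s'|s,a)$, but the argument genuinely needs a Lipschitz bound of the transition kernel in total variation (i.e.\ after integrating in $s'$). I would either justify this by observing that in the setting of the paper the pointwise bound can be integrated against a fixed reference measure with uniformly bounded mass, or explicitly note that the total-variation reading is the one intended, consistent with how the same assumption is used in \cite{ddpg_convergence}. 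Once that reading is fixed, everything else is a routine composition of the contraction property of stochastic operators with the resolvent identity.
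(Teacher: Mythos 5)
Your proof is correct in substance, but it is genuinely different from what the paper does: the paper offers no argument at all for \cref{lem:d_bound} and simply defers to the appendix of \cite{ddpg_convergence}, whereas you supply a self-contained derivation. Your route---writing $d^\pi=(1-\gamma)(I-\gamma P^\pi)^{-1}p_0$, applying the resolvent identity to isolate $\gamma(I-\gamma P^\pi)^{-1}(P^\pi-P^\mu)d^\mu$, and bounding the two factors by the $L^1$ contraction property and by \cref{assum:p_continue} respectively---is clean and has the added benefit of pinning down the otherwise unspecified constant as $C=\gamma/(1-\gamma)$. The one place that deserves the care you already flag is the reading of \cref{assum:p_continue}: as stated it is a pointwise bound on density values for each fixed $s'$, and integrating it over $s'$ only yields your bound if $\mathcal{S}$ has finite Lebesgue measure (absorbing $\mathrm{Vol}(\mathcal{S})$ into $C$) or if the assumption is interpreted as Lipschitz continuity of the kernel in total variation; the latter is the reading consistent with how \cite{ddpg_convergence} uses it, and you are right to make that interpretive choice explicit rather than silently glossing over it. With that caveat stated, your argument is a complete and arguably preferable substitute for the paper's bare citation.
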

\begin{proof}
Please see the appendix of \cite{ddpg_convergence}.
\end{proof}

\begin{lemma}\label{lem:pi_diff}
Let $\max_{s\in\mathcal{S}} d^\beta_\mathcal{D}(s,\pi_\phi(s))\le \epsilon$, which can be acvieved by PRDC. Then with \cref{assum:mu_continue}, we have
$$
\|\pi(s) - \mu(s)\| \le (1+\frac{1}{K_\mu}{\beta})\epsilon, \forall s \in \mathcal{S}, 
$$
\end{lemma}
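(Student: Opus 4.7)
The plan is to bound $\|\pi(s)-\mu(s)\|$ by inserting the nearest-neighbor state-action pair as an intermediate anchor, and then exploiting the Lipschitz continuity of $\mu$ together with the coordinate-wise consequences of the point-to-set bound. This is essentially a stripped-down version of the last three inequalities in the proof of \cref{thm:qsmooth}, without the outer Q-function wrapping.

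Concretely, I would fix an arbitrary $s \in \mathcal{S}$ and let $(\hat{s},\hat{a}) = \argmin_{(\hat{s},\hat{a})\in\mathcal{D}} \|(\beta s)\oplus \pi(s) - (\beta \hat{s})\oplus \hat{a}\|$, so that by hypothesis this minimum value is at most $\epsilon$. Since $\mathcal{D}$ is generated by the deterministic behavior policy $\mu$, every state-action pair stored in $\mathcal{D}$ satisfies $\hat{a}=\mu(\hat{s})$, which licenses us to replace $\hat{a}$ by $\mu(\hat{s})$ throughout. Invoking \cref{lem:error} on the concatenated-norm bound immediately yields the two coordinate estimates
$$\|s-\hat{s}\|\le \epsilon/\beta, \qquad \|\pi(s)-\hat{a}\|\le \epsilon.$$

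Next I would write the decomposition
$$\pi(s)-\mu(s) = \bigl(\pi(s)-\hat{a}\bigr) + \bigl(\mu(\hat{s})-\mu(s)\bigr),$$
apply the triangle inequality (\cref{lem:tri_ineq}) to take norms, and then bound the second summand using \cref{assum:mu_continue} by $K_\mu\|\hat{s}-s\|$. Plugging in the two coordinate estimates gives
$$\|\pi(s)-\mu(s)\| \le \epsilon + K_\mu\cdot \frac{\epsilon}{\beta} = \left(1+\frac{K_\mu}{\beta}\right)\epsilon,$$
which matches the intended statement (the factor in the lemma as typeset appears to contain a transcription slip of the numerator and denominator).

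There is no genuine obstacle here: the entire argument is four lines long once the right anchor is chosen. The only points worth flagging carefully are (i) the existence of the $\argmin$, which is assumed implicitly throughout the paper since $\mathcal{D}$ is finite, and (ii) the identification $\hat{a}=\mu(\hat{s})$, which is the mechanism that converts a bound on $\|\pi(s)-\hat{a}\|$ (an object about the stored action) into a bound involving $\mu$ (an object about the policy). Beyond these bookkeeping items the proof is a direct reuse of the tools already established for \cref{thm:qsmooth}.
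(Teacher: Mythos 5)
Your proof is correct and is essentially identical to the paper's: the same nearest-neighbor anchor $(\hat{s},\hat{a})$ with $\hat{a}=\mu(\hat{s})$, the same decomposition via $\mu(\hat{s})$, the triangle inequality, \cref{assum:mu_continue}, and \cref{lem:error}, yielding $(1+K_\mu/\beta)\epsilon$. You are also right that the factor as typeset in the lemma statement is a transcription slip; the bound actually proved and used in \cref{thm:perf} is $(1+\frac{K_\mu}{\beta})\epsilon$.
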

\begin{proof}
We follow a similar proof of \cref{thm:qsmooth}. Let
$$
(\hat{s}, \hat{a}) = \argmin_{(\hat{s},\hat{a}) \in \mathcal{D}} d_\mathcal{D}^\beta (s, \pi(s)), 
$$
where $\hat{a} = \mu(\hat{s})$. Thus
$$
\|\pi(s) - \mu(\hat{s})\|\leq \epsilon.
$$
Then,
$$
\begin{aligned}
\|\pi(s) - \mu(s)\|
&= \|\pi(s) - \mu(\hat{s}) + \mu(\hat{s}) - \mu(s)\|\\
&\overset{(\romannumeral1)}{\le} \|\pi(s) - \mu(\hat{s})\| + \|\mu(\hat{s}) - \mu(s)\| \\
&\overset{(\romannumeral2)}{\le} \|\pi(s) - \mu(\hat{s})\| + K_\mu \|\hat{s} - s\|\\
&\overset{(\romannumeral3)}{\le} (1+\frac{K_\mu}{\beta})\epsilon.
\end{aligned}
$$
Here, $(\romannumeral1)$ is due to \cref{lem:tri_ineq}; $(\romannumeral2)$ is due to \cref{assum:mu_continue}; and $(\romannumeral3)$ is due to \cref{lem:error}.
\end{proof}

\perf*

\begin{proof}
With \cref{lem:tri_ineq}, we have
\begin{equation}
    \begin{aligned}
        \abs{J(\pi^*) - J(\pi)}
    &= \abs{J(\pi^*) - J(\mu) + J(\mu) - J(\pi)}\\
    &\leq \abs{J(\pi^*) - J(\mu)} + \abs{J(\pi) - J(\mu)}.
    \end{aligned}
\end{equation}
We first bound $\abs{J(\pi) - J(\mu)}$. By taking into \cref{equ:dperf}, we get
\begin{equation}\label{equ:thm2_1}
\begin{aligned}
\abs{J(\pi) - J(\mu)}
&= \abs{\frac{1}{1-\gamma}\mathbb{E}_{s\sim d^\pi(s)}\bqty{r(s)} - \frac{1}{1-\gamma}\mathbb{E}_{s\sim d^\mu(s)}\bqty{r(s)}} \dd s \\
&= \frac{1}{1-\gamma}\abs{\int_\mathcal{S}(d^\pi(s) - d^\mu(s))r(s)\dd s}\\
&\overset{(\romannumeral1)}{\le} \frac{1}{1-\gamma}\int_\mathcal{S}\abs{d^\pi(s) - d^\mu(s)}\abs{r(s)}\dd s \\
&\le \frac{R_{\max}}{1-\gamma} \int_\mathcal{S}\abs{d^\pi(s) - d^\mu(s)}\dd s. \\
&\overset{(\romannumeral2)}{\le} \frac{CK_{\mathcal{P}}R_{\max}}{1-\gamma}\max_{s\in \mathcal{S}}\|\pi(s)-\mu(s)\|\\
&\overset{(\romannumeral3)}{\le} \frac{CK_{\mathcal{P}}R_{\max}}{1-\gamma}(1+\frac{K_\mu}{\beta})\epsilon_\pi.
\end{aligned}
\end{equation}
Here, $(\romannumeral1)$ is due to \cref{lem:abs_inq}; $(\romannumeral2)$ is due to \cref{lem:d_bound}; and $(\romannumeral3)$ is due to \cref{lem:pi_diff}.

Similarly, we have that 
\begin{equation}
    \abs{J(\pi^*) - J(\mu)} \leq \frac{CK_{\mathcal{P}}R_{\max}}{1-\gamma} \epsilon_{\rm opt}.
\end{equation}
Thus,
\begin{equation}
    \abs{J(\pi^*)-J(\pi)} \le \frac{CK_{\mathcal{P}}R_{\max}}{1-\gamma}\bqty{(1+\frac{K_\mu}{\beta})\epsilon_\pi + \epsilon_{\rm opt}}.
\end{equation}
The proof is finished.
\end{proof}

\section{Implementation Details}\label{sec:exp_details}

We implement PRDC based on the author-provided implementation of TD3+BC\footnote{\url{https://github.com/sfujim/TD3_BC}}, and the KD-tree implementation comes from SciPy \cite{scipy}. The full hyper-parameters setting is in \cref{tab::para}. We also utilize linear reward transformation (the simplest form of reward shaping~\cite{reward_shift}), a common trick used by CQL, IQL, FisherBRC, etc.~\cite{cql, fisherbrc, iql}, to accelerate training. Specifically, we transform the real reward function $r$ to $\tilde{r}$ by:
$$
    \tilde{r} = \texttt{scale} * r + \texttt{shift},
$$
where $\texttt{scale}$ and $\texttt{shift}$ are hyper-parameters.

\subsection{Software}
We use the following software versions: 
\begin{itemize}
    \item Python 3.8
    \item MuJoCo 2.2.0 \cite{mujoco}
    \item Gym 0.21.0 \cite{gym}
    \item MuJoCo-py 2.1.2.14
    \item PyTorch 1.12.1 \cite{pytorch}
\end{itemize}

\subsection{Hardware}
We use the following hardware:
\begin{itemize}
    \item NVIDIA RTX A4000
    \item 12th Gen Intel(R) Core(TM) i9-12900K
\end{itemize}

\section{More Results}\label{sec:more_results}

\subsection{Sensitivity on the Hyper-paramter $\beta$}

This section investigates how $\beta$ influences the policy's performance. On the hopper, halfcheetah, and walker2d (-random-v2, -medium-replay-v2, -medium-v2, -medium-expert-v2) datasets, we train PRDC for $1$M steps over $5$ seeds with $\beta\in \{0.1, 1, 2, 5, 10, 1000\}$ and keep other hyper-parameters the same. The result is shown in \cref{fig:full_ablation_beta}. Apart from the discussion in \cref{sec:beta_ablation}, we notice that PRDC performs better on the random datasets when $\beta \in \{0.1, 1, 2\}$ than when $\beta \in \{5, 10, 1000\}$. The reason is that a bigger $\beta$ makes the regularization in \cref{equ:reg} more like behavior cloning \cite{bc}. However, on the random datasets, behavior cloning will return a poor policy. On the other hand, a smaller $\beta$ will leave more room for RL optimization.

\subsection{$k$ Nearest Neighbor Constraint}

This section investigates whether more than one nearest-neighbor constraint can improve the policy's performance. To this end, we modify the regularization defined in \cref{equ:reg} to minimize the distance between $\pi(s)$ and the average action of $(s,\pi(s))$'s $k$ nearest neighbors. That is, we define a new regularization as
\begin{equation}
    \min_\phi \mathcal{L}(\phi) := \|\pi(s) - \overline{a}\|,
\end{equation}
where $\overline{a}$ denotes the average action of $(s,\pi(s))$'s $k$ nearest neighbors.

We choose $k \in \{1, 2, 4\}$ and train PRDC for $1$M steps on the hopper, halfcheetah, and walker2d (-random-v2, -medium-replay-v2, -medium-v2, -medium-expert-v2) datasets. The result is shown in \cref{fig:full_ablation_k}, where we see that this average action regularization does not improve the performance on almost the tasks. The learning curves become much more unstable, where we speculate that it is because there are non-optimal ones in the actions of the $k$ nearest neighbors.

\begin{table*}[ht]
    \caption{Full hyper-parameters setting of PRDC.}
    \label{tab::para}
    \begin{center}
        \begin{tabular}{c|c|c} 
            \toprule
            & Hyper-parameters & Value\\
            \midrule
            \multirow{8}{*}{Network} &Actor learning rate & $3 \times 10^{-4}$ \\
            & Critic learning rate & \thead[c]{$3 \times 10^{-4}$ for MuJoCo \\ $1 \times 10^{-3}$ for AntMaze}\\
            & Batch size & 256 \\
            & Optimizer & Adam \\
            & Q-network & 3 layers ReLU activated MLPs with 256 units\\
            & Policy Network & 3 layers ReLU activated MLPs with 256 units\\
            \hline 
            \multirow{8} {*} {TD3} & Critic learning rate & \thead[c]{0.99 for MuJoCo \\ 0.995 for AntMaze}\\
            & Number of iterations & $10^6$ \\
            & Target update rate $\tau$ & 0.005\\
            & Policy noise & 0.2\\
            & Policy noise clipping & 0.5 \\
            & Policy update frequency & 2 \\
            \hline
            \multirow{9} {*} {PRDC} &Normalized state & True \\ 
            % & Reward scaling $w$ & \thead[c]{1 for MuJoCo \\ 10000 for AntMaze} \\
            & $k$ & 1 \\
            & $\beta$ & \thead[c]{2.0 for MuJoCo \\ \{2.0, 7.5, 15.0\} for AntMaze \{umaze, medium, large\}} \\
            & $\alpha$ & \thead[c]{2.5 for Hopper and Walker2d, 40.0 for HalfCheetah \\ \{2.5, 7.5, 20.0\} for AntMaze \{umaze, medium, large\}} \\
            & linear reward transformation & \thead[c]{$\texttt{scale}=1, \texttt{shift}=0$ for MuJoCo \\ $\texttt{scale}=10000, \texttt{shift}=-1$ for AntMaze } \\
            \bottomrule
        \end{tabular}
    \end{center}
\end{table*}

\begin{figure*}[ht]
    \begin{center}
        \subfigure[walker2d-random-v2]{
            \includegraphics[width=0.3\linewidth]{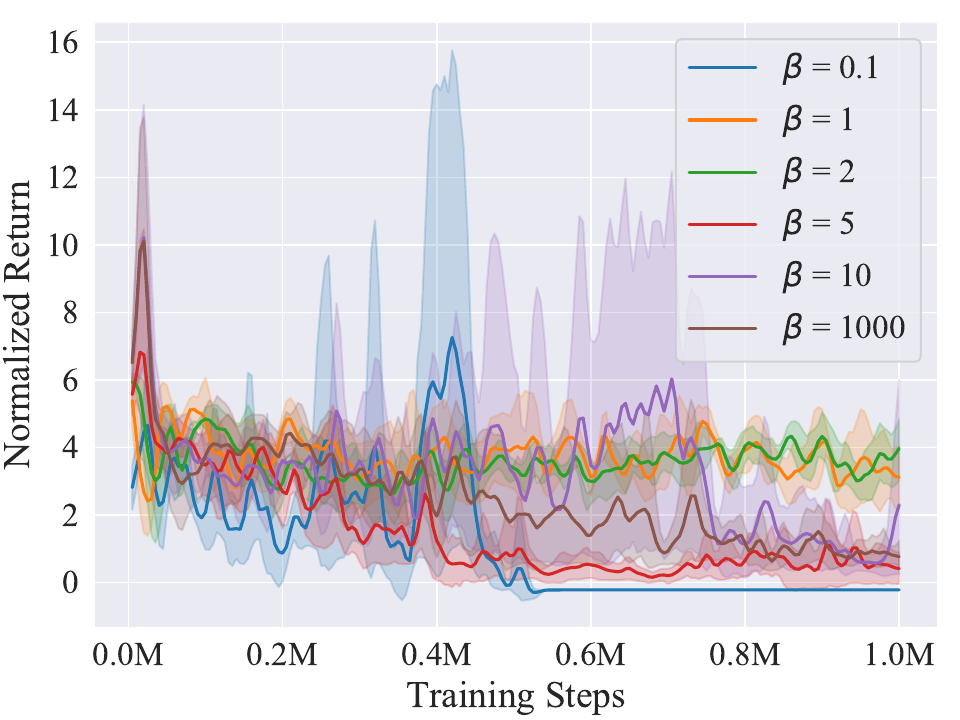}
        }
        \subfigure[halfcheetah-random-v2]{
            \includegraphics[width=0.3\linewidth]{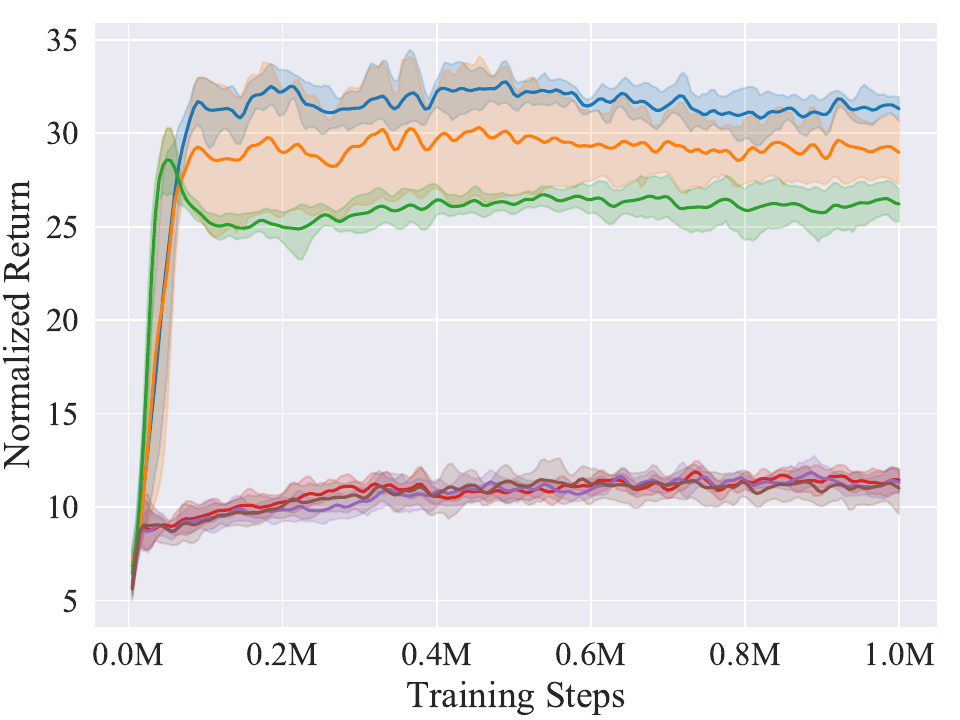}
        }
        \subfigure[hopper-random-v2]{
            \includegraphics[width=0.3\linewidth]{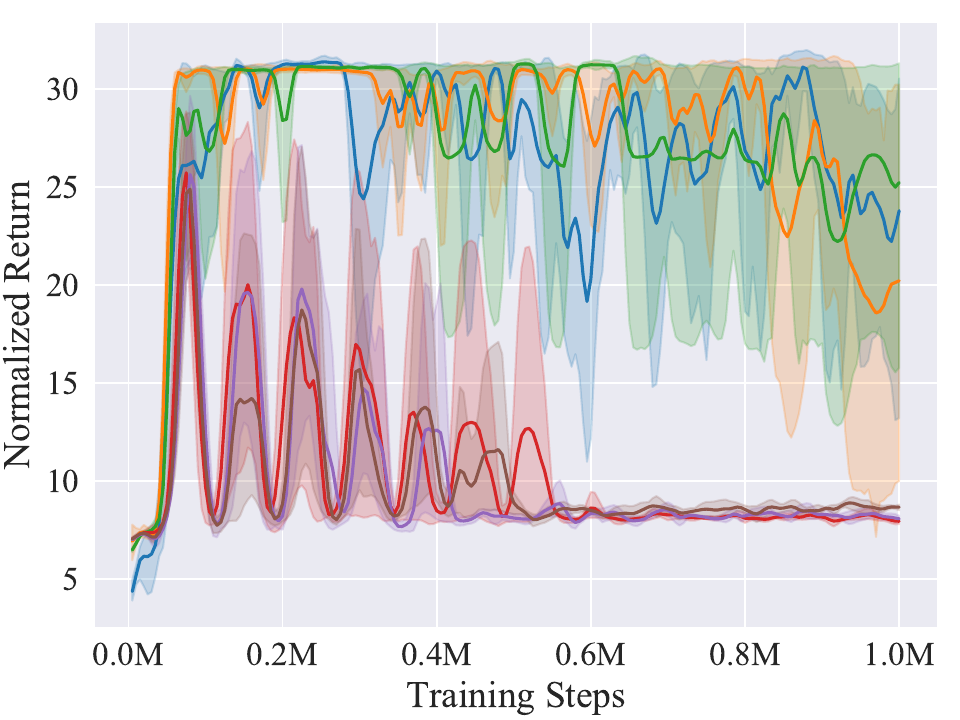}
        }\\
        \subfigure[walker2d-medium-replay-v2]{
            \includegraphics[width=0.3\linewidth]{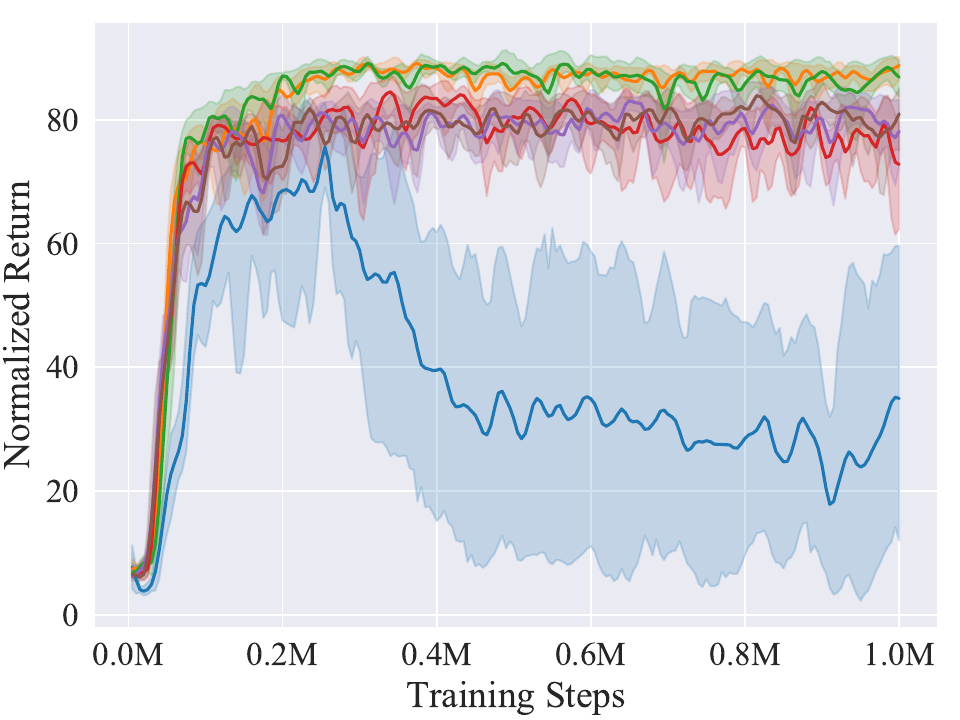}
        }
        \subfigure[halfcheetah-medium-replay-v2]{
            \includegraphics[width=0.3\linewidth]{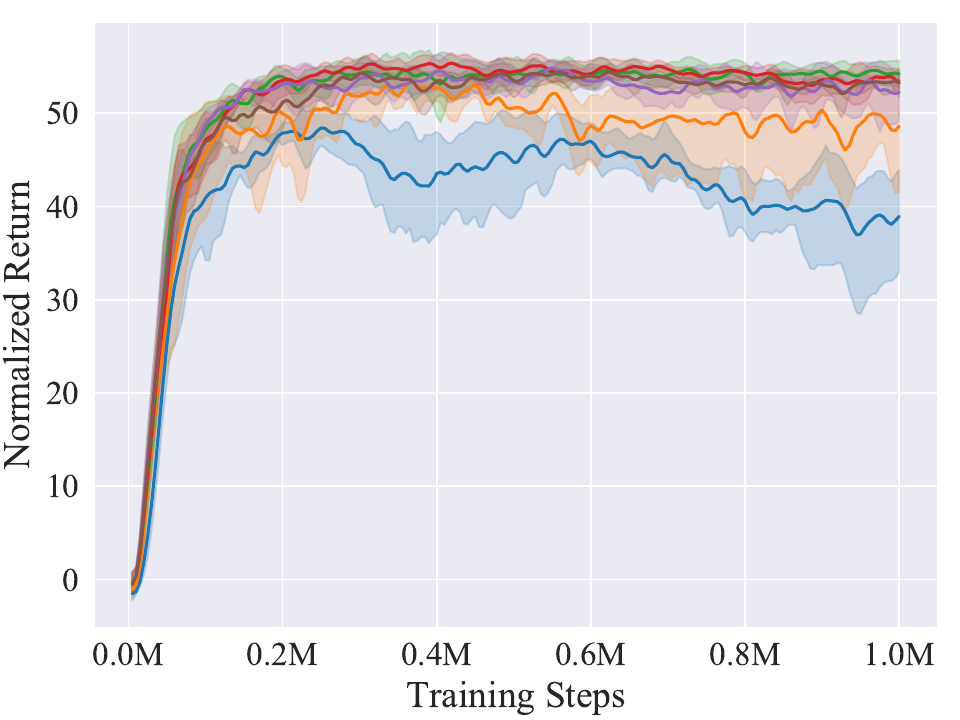}
        }
        \subfigure[hopper-medium-replay-v2]{
            \includegraphics[width=0.3\linewidth]{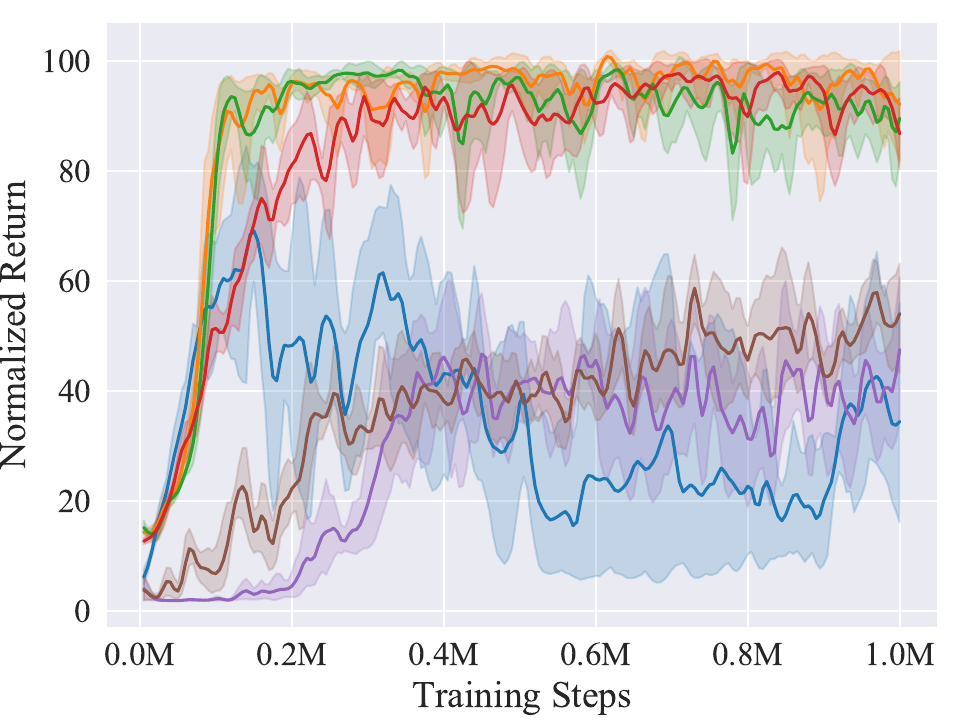}
        }\\
        \subfigure[walker2d-medium-v2]{
            \includegraphics[width=0.3\linewidth]{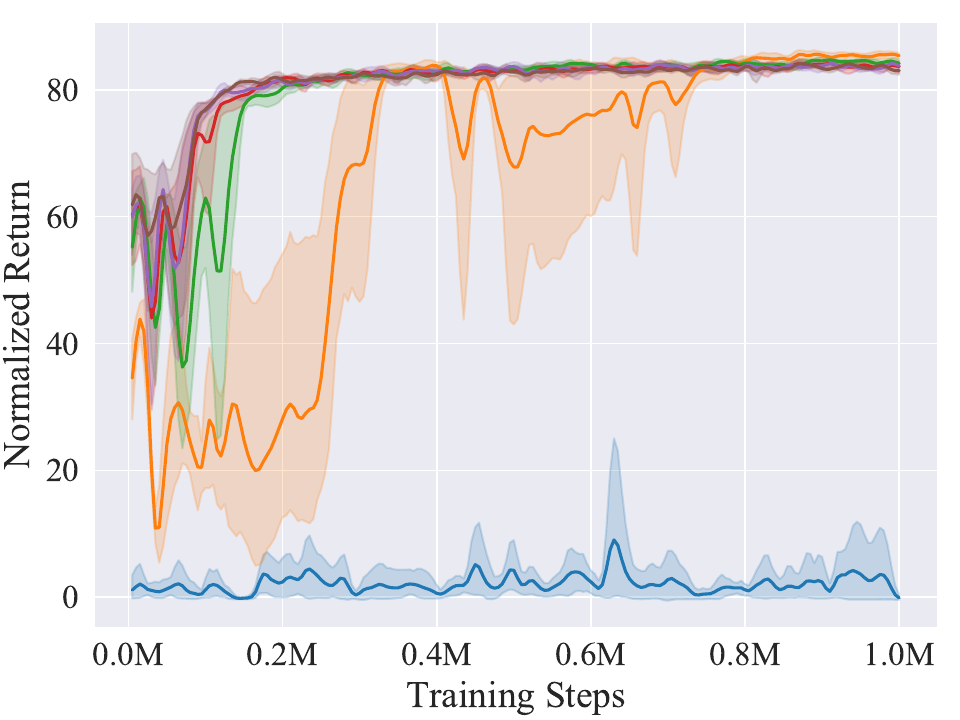}
        }
        \subfigure[halfcheetah-medium-v2]{
            \includegraphics[width=0.3\linewidth]{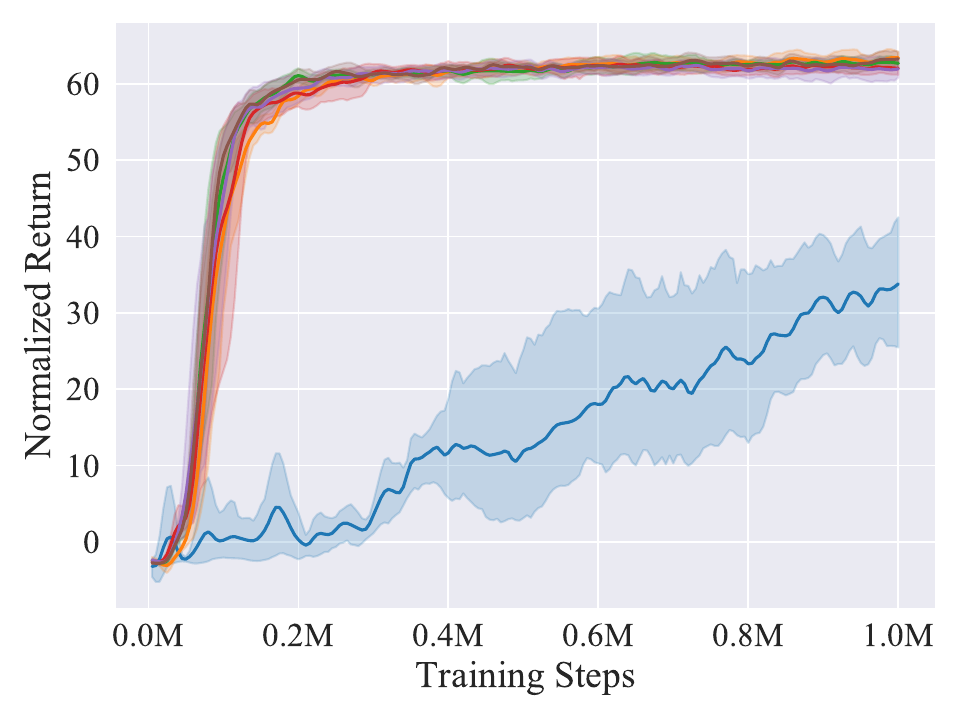}
        }
        \subfigure[hopper-medium-v2]{
            \includegraphics[width=0.3\linewidth]{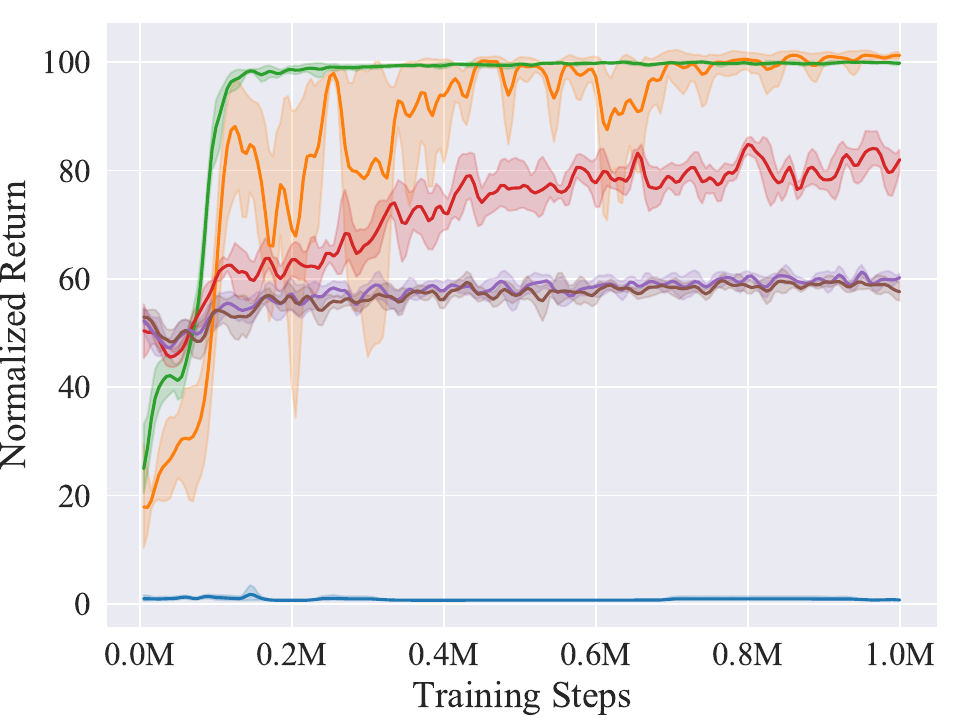}
        }\\
        \subfigure[walker2d-medium-expert-v2]{
            \includegraphics[width=0.3\linewidth]{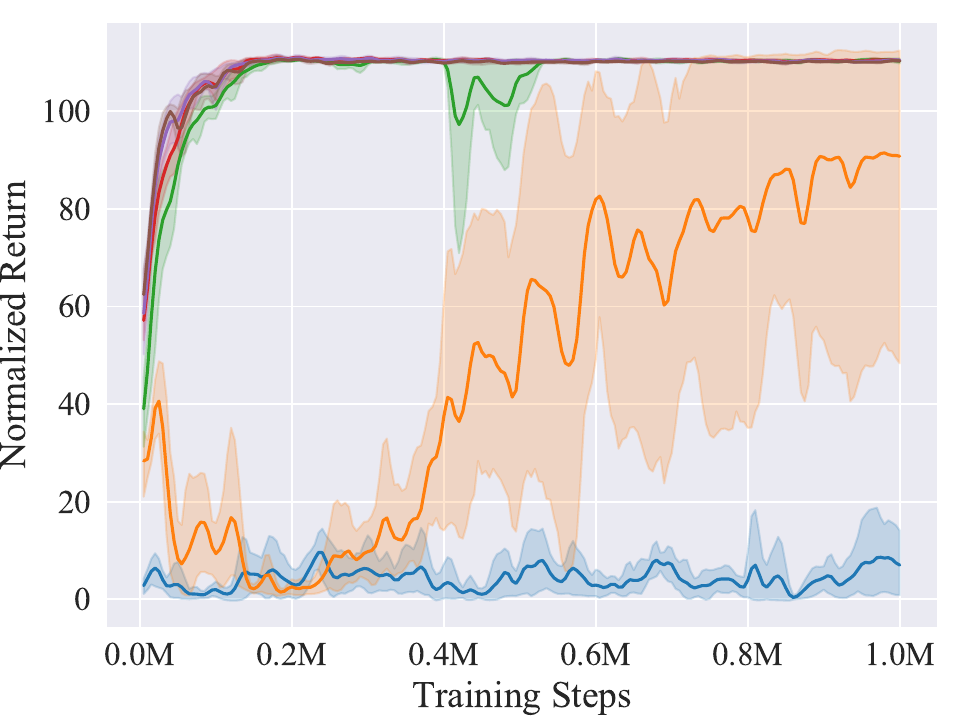}
        }
        \subfigure[halfcheetah-medium-expert-v2]{
            \includegraphics[width=0.3\linewidth]{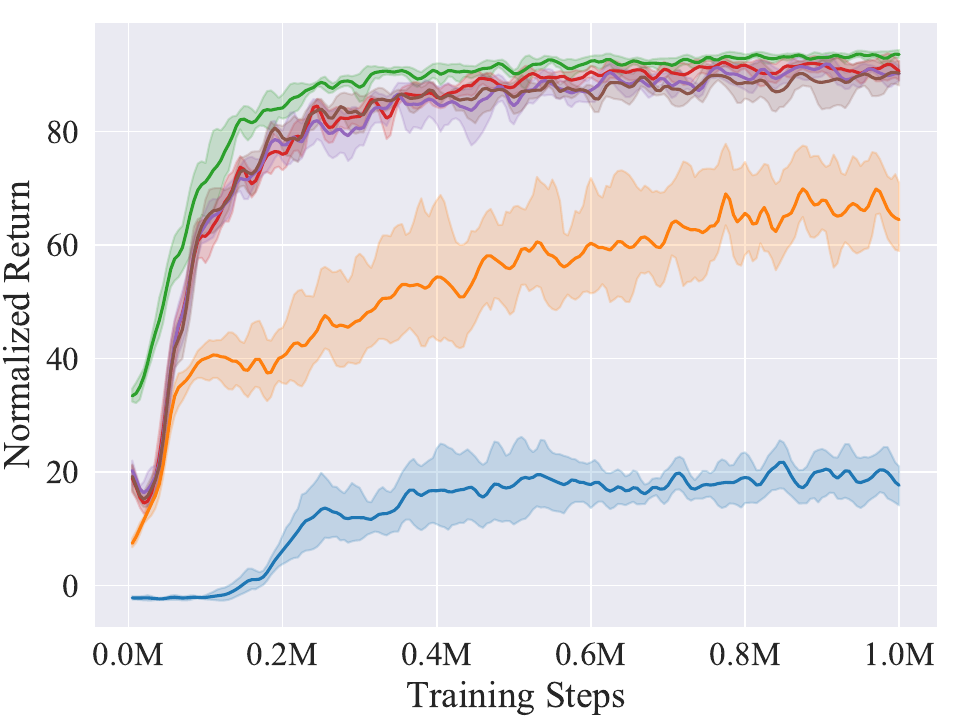}
        }
        \subfigure[hopper-medium-expert-v2]{
            \includegraphics[width=0.3\linewidth]{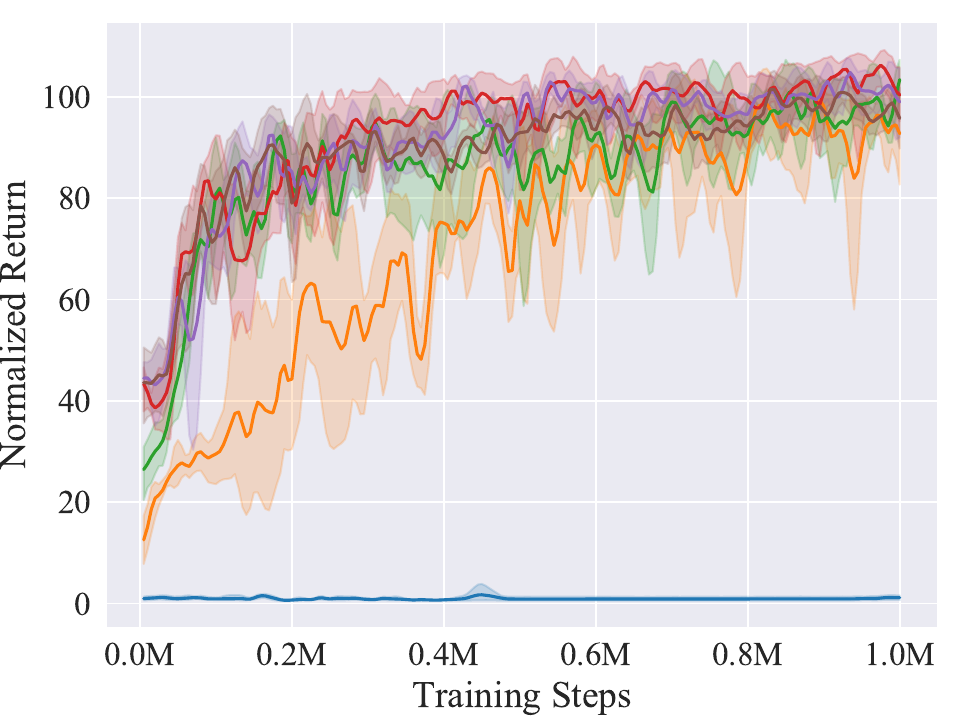}
        }
    \end{center}
    \caption{Performances of PRDC with different $\beta$, a hyper-parameter in \cref{def:psd}.}
    \label{fig:full_ablation_beta}
\end{figure*}

\begin{figure*}[ht]
    \begin{center}
        \subfigure[walker2d-random-v2]{
            \includegraphics[width=0.3\linewidth]{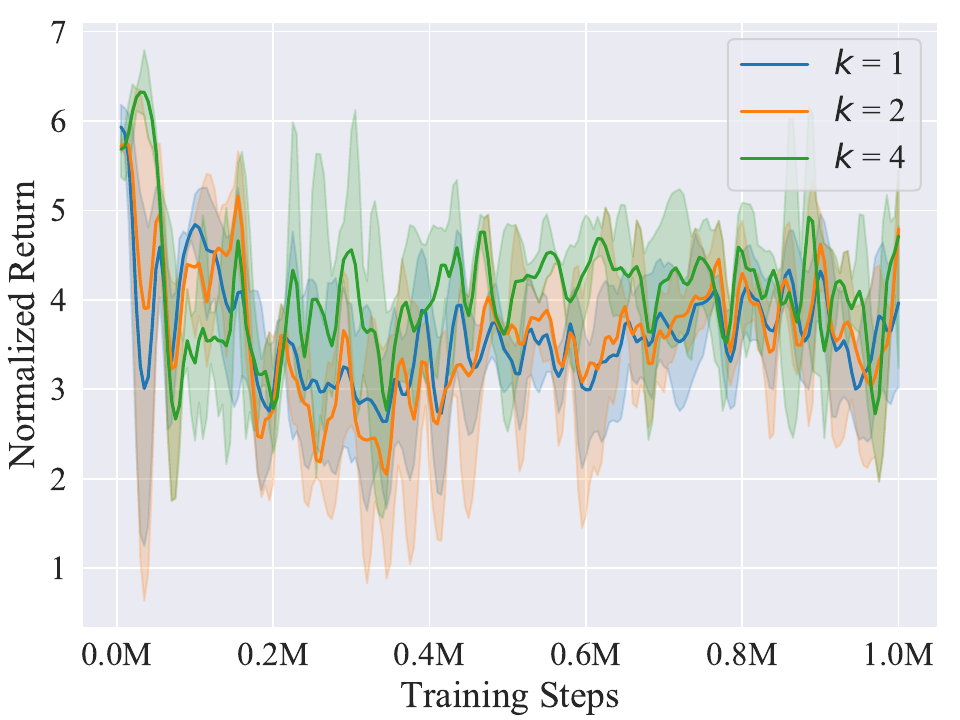}
        }
        \subfigure[halfcheetah-random-v2]{
            \includegraphics[width=0.3\linewidth]{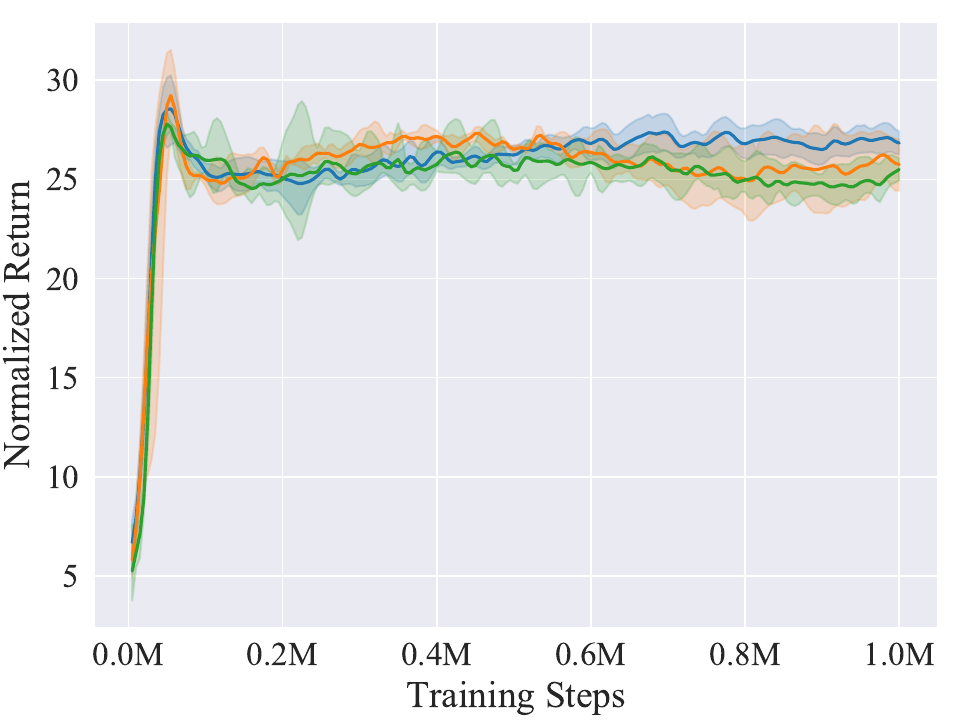}
        }
        \subfigure[hopper-random-v2]{
            \includegraphics[width=0.3\linewidth]{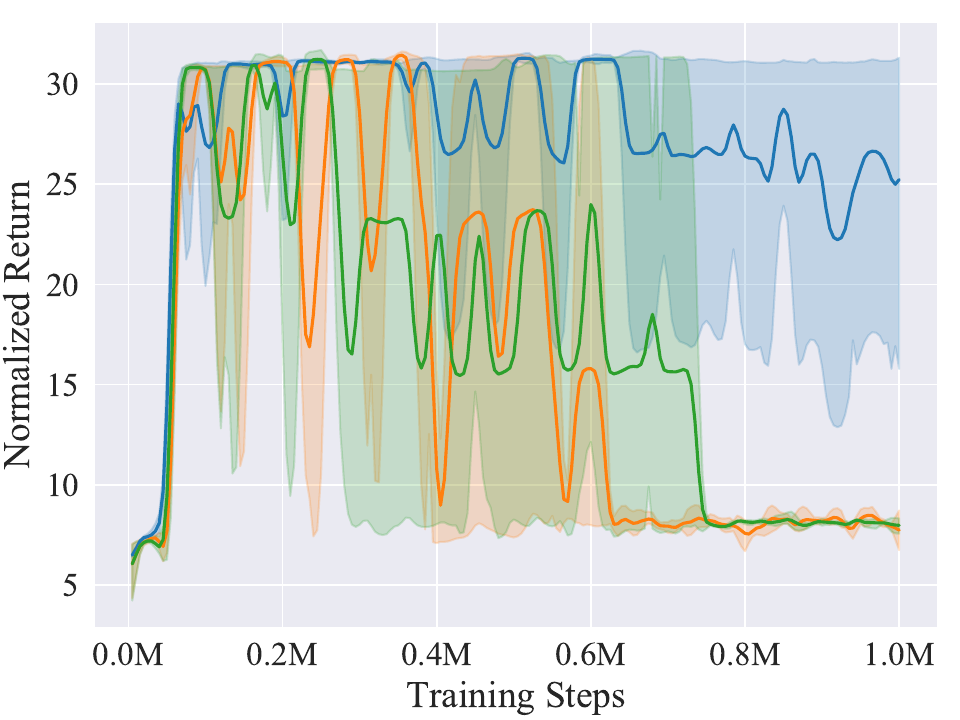}
        }\\
        \subfigure[walker2d-medium-replay-v2]{
            \includegraphics[width=0.3\linewidth]{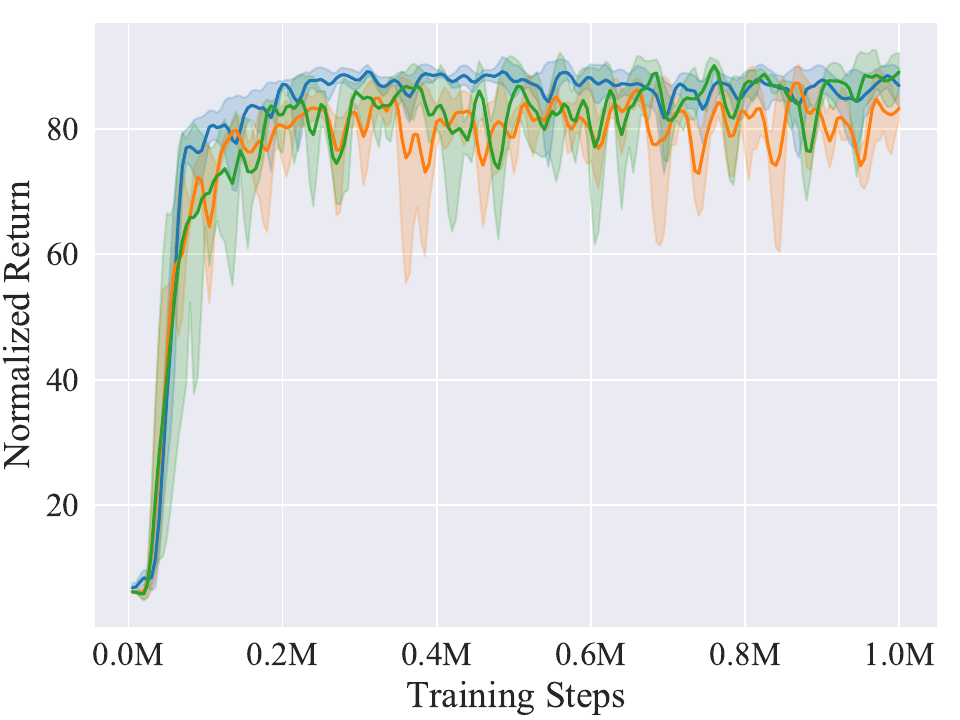}
        }
        \subfigure[halfcheetah-medium-replay-v2]{
            \includegraphics[width=0.3\linewidth]{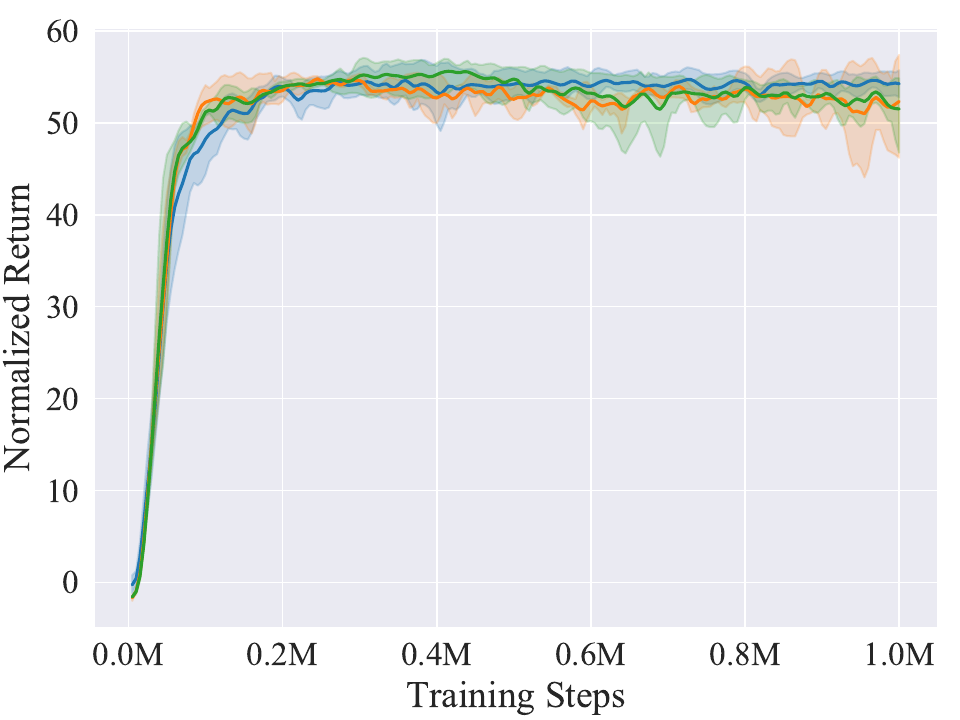}
        }
        \subfigure[hopper-medium-replay-v2]{
            \includegraphics[width=0.3\linewidth]{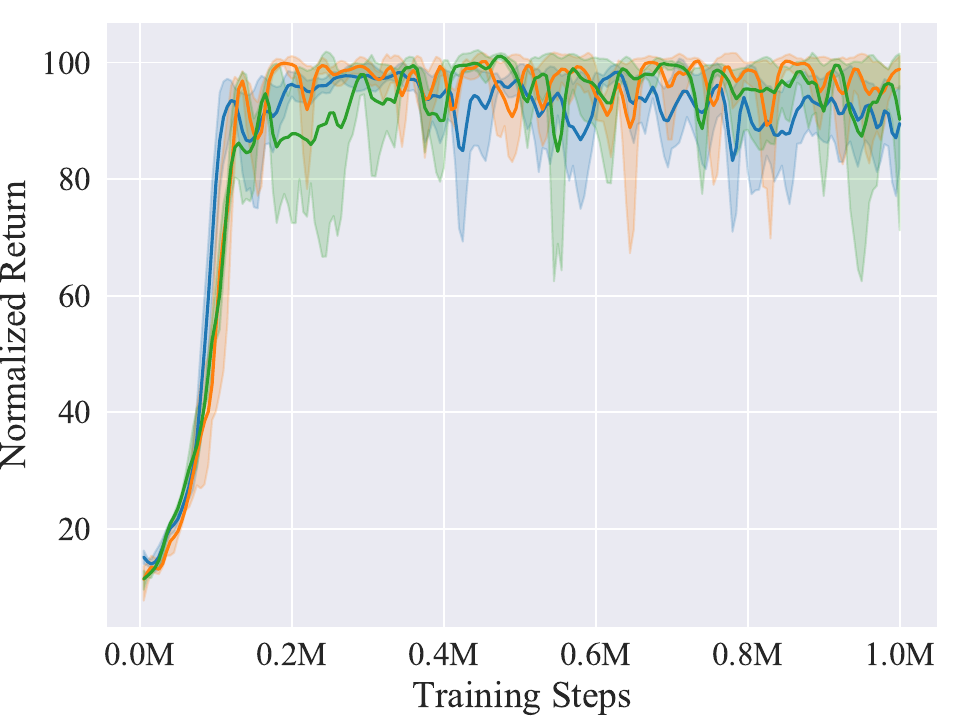}
        }\\
        \subfigure[walker2d-medium-v2]{
            \includegraphics[width=0.3\linewidth]{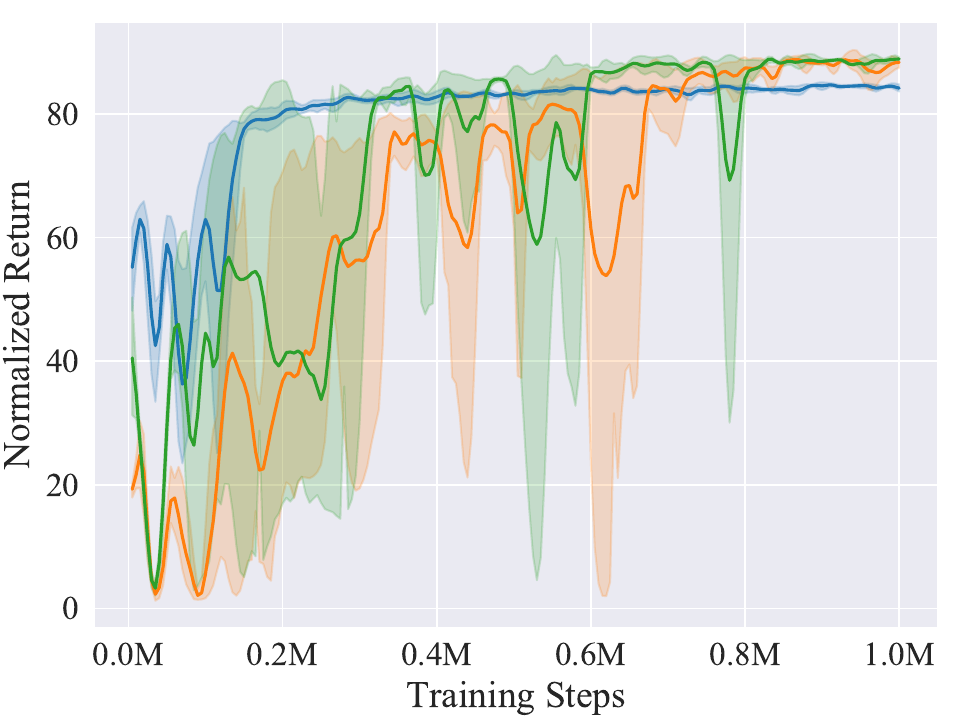}
        }
        \subfigure[halfcheetah-medium-v2]{
            \includegraphics[width=0.3\linewidth]{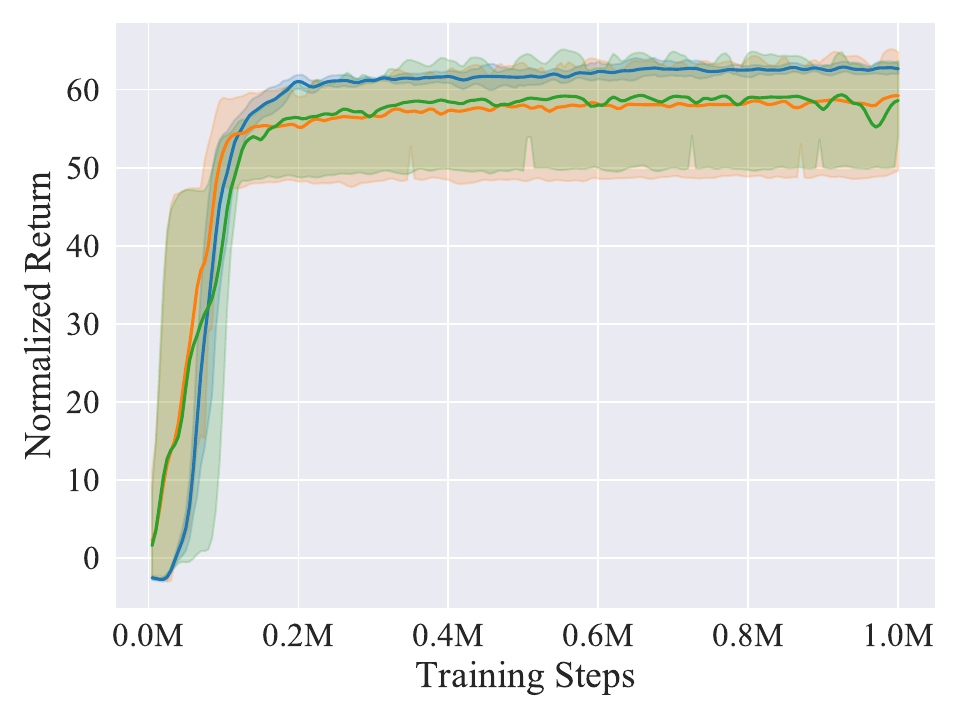}
        }
        \subfigure[hopper-medium-v2]{
            \includegraphics[width=0.3\linewidth]{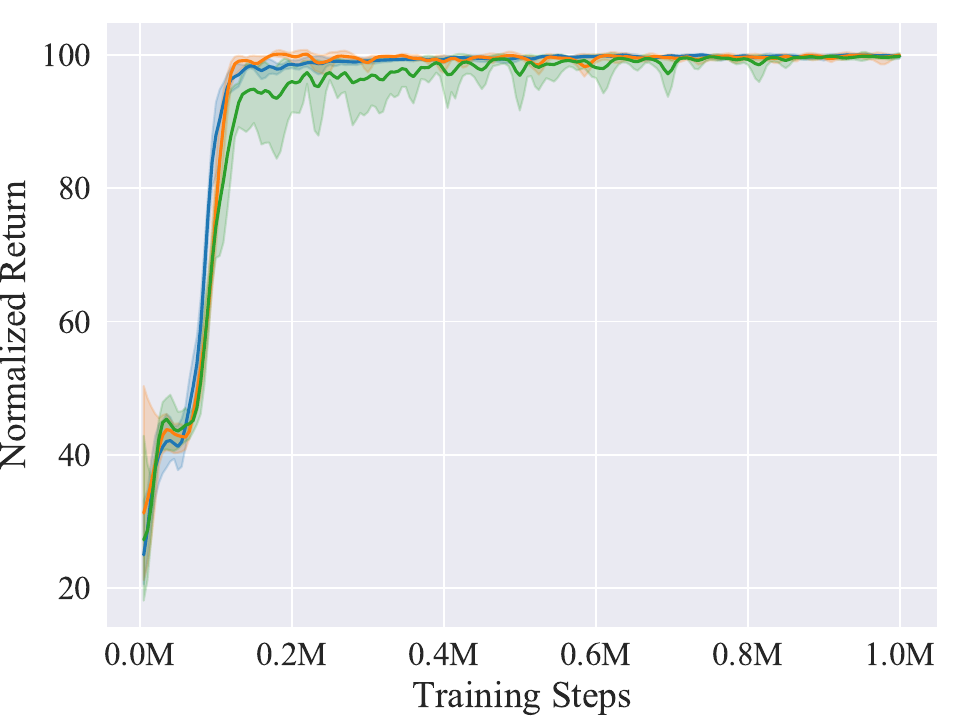}
        }\\
        \subfigure[walker2d-medium-expert-v2]{
            \includegraphics[width=0.3\linewidth]{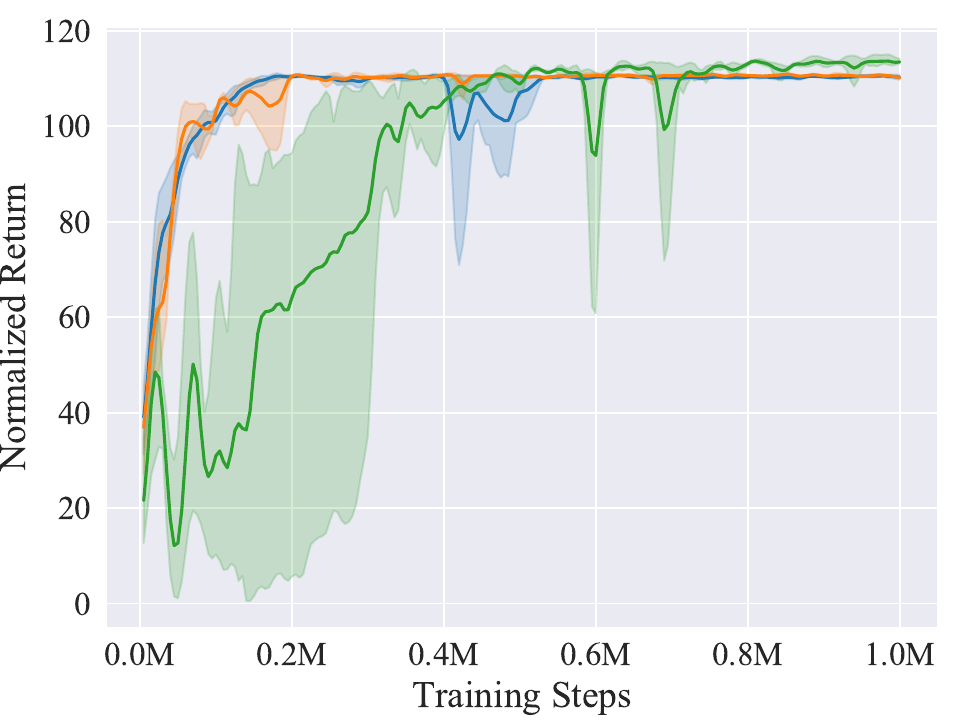}
        }
        \subfigure[halfcheetah-medium-expert-v2]{
            \includegraphics[width=0.3\linewidth]{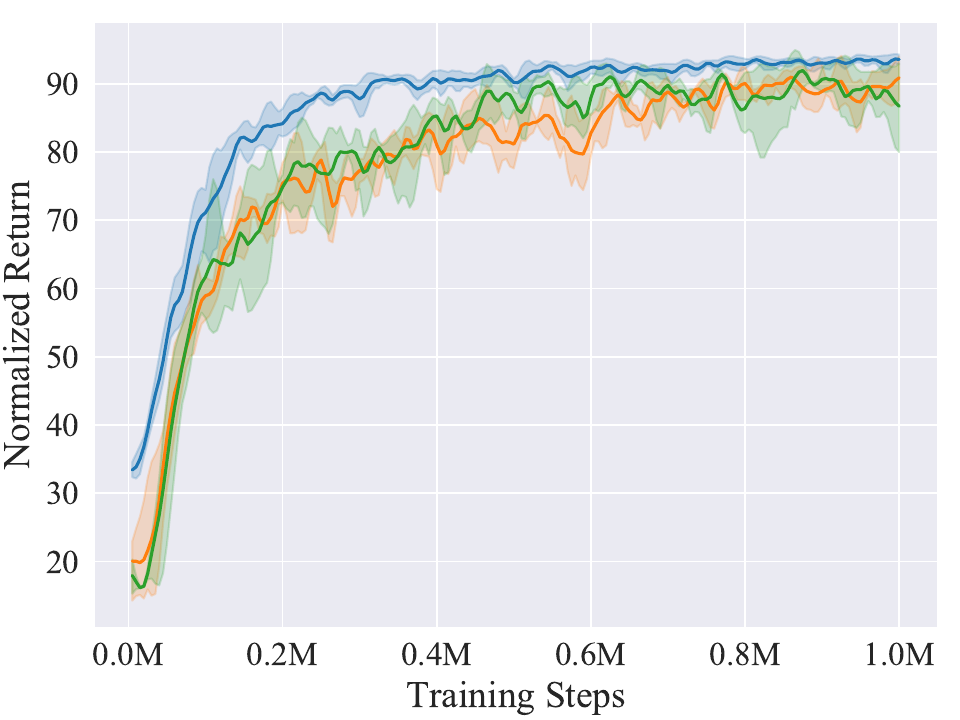}
        }
        \subfigure[hopper-medium-expert-v2]{
            \includegraphics[width=0.3\linewidth]{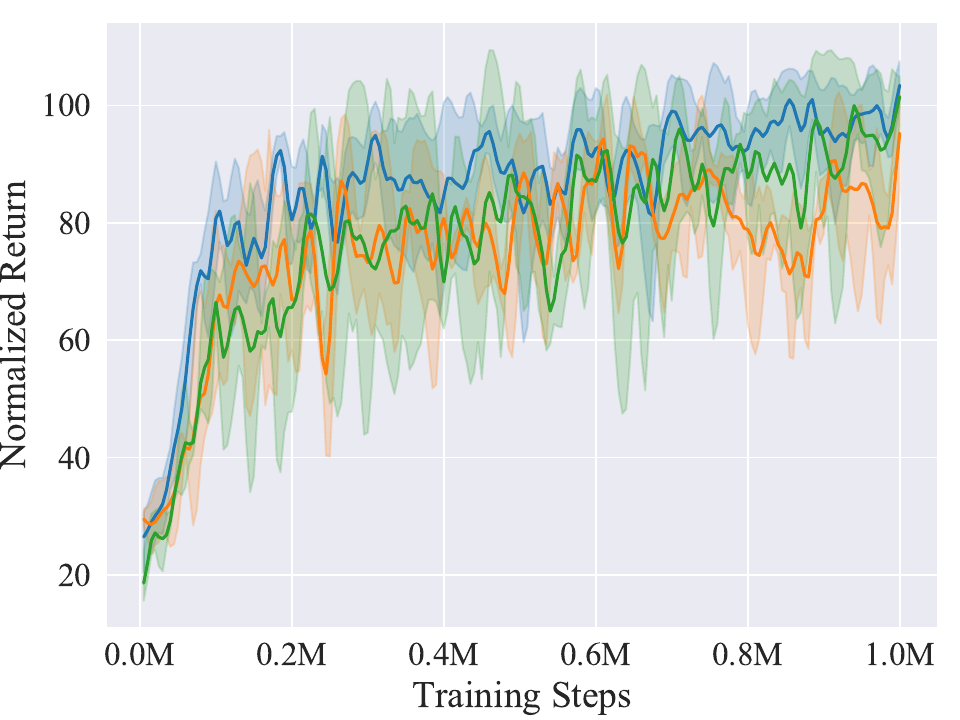}
        }
    \end{center}
    \caption{Performances of PRDC with $k$ nearest neighbors constraint.}
    \label{fig:full_ablation_k}
\end{figure*}

%%%%%%%%%%%%%%%%%%%%%%%%%%%%%%%%%%%%%%%%%%%%%%%%%%%%%%%%%%%%%%%%%%%%%%%%%%%%%%%
%%%%%%%%%%%%%%%%%%%%%%%%%%%%%%%%%%%%%%%%%%%%%%%%%%%%%%%%%%%%%%%%%%%%%%%%%%%%%%%

\end{document}